\documentclass{article}

    \PassOptionsToPackage{numbers, compress}{natbib}

 \usepackage[main, final]{neurips_2026}

\usepackage[utf8]{inputenc} 
\usepackage[T1]{fontenc}    
\usepackage{hyperref}       
\usepackage{url}            
\usepackage{booktabs}       
\usepackage{amsfonts}       
\usepackage{nicefrac}       
\usepackage{microtype}      
\usepackage{xcolor}         
\usepackage{amsmath}
\usepackage{graphicx}  
\usepackage{float}  
\usepackage{subfigure}  
\usepackage{svg}
\usepackage{algorithm}
\usepackage{algorithmic}
\usepackage{multirow}
\usepackage{enumerate}
\usepackage{amsthm}
\usepackage{multicol}
\usepackage{mathtools}
\usepackage{enumitem}

\newtheorem{theorem}{Theorem}

\newtheorem{proposition}{Proposition}

\title{Variational Consequence-Driven Offline Reinforcement Learning}

\author{%
  Wen Jiang$^{1,2}$, 
  Ke Jiang$^{1,2,3}$, 
  Jin Wang$^{1,2}$, 
  Xiaoyang Tan$^{1,2}$ \\
  $^1$College of Computer Science and Technology, Nanjing University of Aeronautics and Astronautics \\
  $^2$MIIT Key Laboratory of Pattern Analysis and Machine Intelligence \\
  $^3$Graduate School of Information Science and Technology, The University of Osaka \\
  \texttt{\{darren.jum, ke\_jiang, jinw1999, x.tan\}@nuaa.edu.cn}
}

\begin{document}

\maketitle

\begin{abstract}

Offline reinforcement learning (RL) typically mitigates \textit{distribution shift} by imposing divergence constraints on the action distributions of the learned and behavior policies. However, this paradigm cannot ensure the underlying dynamic consistency structure: even minor action deviations can lead to drastically different transitions, inadvertently driving the agent away from the offline dataset.
To address this, we explore policy constraints through their induced state-consequence distributions rather than relying on pure action-distribution matching. Nevertheless, directly enforcing such consequence-aware constraints is computationally intractable without access to the true underlying dynamics model.
To bypass this issue, we propose Structural Behavior Regularization (StBR), which replaces the intractable objective with a closed-form latent divergence. Structuring this latent space to reflect transition dynamics renders the latent divergence a theoretically bounded surrogate for the true consequence-distribution divergence.
By regularizing the learned policy with this latent divergence, StBR adaptively tightens the behavior constraint in dynamics-sensitive regions, effectively preventing deviations outside the offline dataset. Empirically, we demonstrate that StBR achieves superior performance across diverse offline RL benchmarks.



\end{abstract}

\section{Introduction}

Offline Reinforcement Learning (RL) aims to learn effective policies entirely from a fixed dataset collected by unknown behavior policies \citep{batchrl}. This paradigm holds immense promise for real-world applications where data collection is costly or dangerous, such as robotics~\citep{sinha2022s4rl}, healthcare~\citep{tang2022leveraging}, and autonomous driving~\citep{fang2022offline}. However, standard RL algorithms often fail in this setting due to the challenge of \textit{distribution shift}, where the learned policy targets out-of-distribution (OOD) actions that are erroneously overestimated by the value function~\citep{orlreview}.

To avoid OOD actions, a dominant approach is \textit{behavior regularization}, which restricts the learned policy to remain close to the behavior policy via divergence measures. Early methods employ simple Mean Squared Error (MSE)~\cite{td3bc} for action cloning, which inherently assume unimodal distributions and often induce over-conservatism~\citep{bear} or mode collapse~\citep{cped} on complex datasets. 
To alleviate such drawbacks, subsequent approaches incorporate generative models (e.g., VAE~\citep{spot}, flow-GAN~\citep{cped}) and distribution divergence (e.g., Kullback-Leibler (KL)~\citep{osr}, Maximum Mean Discrepancy (MMD)~\citep{bear}) to constrain the learned policy within the behavior support, successfully generalizing beyond mere data points. While this paradigm shift brings initial improvements, the limited expressivity of these generative models eventually bottlenecks performance, prompting recent methods to leverage \textit{diffusion}~\citep{eql} and \textit{flow matching}~\citep{fql} for more precise modeling.
However, these methods fundamentally remain rooted in direct action distribution matching, which is inherently agnostic to the underlying dynamic consistency. In practice, due to function approximation and finite dataset coverage, estimation errors in the learned action distribution are practically inevitable. As illustrated in Figure~\ref{fig:sbr_concept} (left), even minor action deviations can lead to drastically different state transitions. Consequently, the agent inadvertently deviates from the offline dataset, particularly in dynamics-sensitive regions.

To address this, we explore policy constraints through their induced state-consequence distributions rather than relying on pure action-distribution matching. However, directly enforcing such consequence-aware constraints is computationally intractable: it requires access to the true, unknown environment dynamics, and explicitly estimating dynamics density from offline data is notoriously infeasible (as shown in Figure~\ref{fig:sbr_concept}, right). To bypass this issue, we replace the direct evaluation of consequence constraints with a tractable surrogate divergence in latent space. 
Specifically, we propose Structural Behavior Regularization (StBR)\footnote{Code is available at: \url{https://github.com/DarrenJum/StBR}}, a novel framework that approximates the intractable objective via a closed-form KL divergence computed within a structured, consequence-aware Gaussian latent space (Figure~\ref{fig:sbr_concept}, middle).

To effectively construct this latent space, StBR introduces a State Belief Gaussian Fitting technique grounded in the Variational Information Bottleneck principle~\citep{vib}. By mapping state-action pairs into the latent space, State Belief Gaussian Fitting extracts the essential consequence-aware information and structures the aggregated behavior embeddings into a unimodal Gaussian distribution, serving as a state belief. It is crucial to note that this unimodality is not an assumption about the real-world action or consequence distributions; rather, it is purely a geometric constraint imposed on the latent space. We theoretically establish that the discrepancy between the true consequence divergence and this surrogate is provably bounded, guaranteeing a reliable approximation. 
Furthermore, because this latent space captures environmental dynamics information, minor action deviations naturally trigger drastic shifts in the latent representations.
By regularizing the learned policy with this latent divergence, StBR translates these representation shifts into an adaptive penalty, effectively tightening the behavior constraint in dynamics-sensitive regions. Empirically, StBR achieves superior performance across diverse offline RL benchmarks.

Our main contributions are summarized as follows:
\begin{itemize} [itemsep=3pt, topsep=0pt, parsep=0pt, partopsep=0pt]
    \item We propose Structural Behavior Regularization (StBR), a novel framework that enforces conservatism through the policy's induced consequences via a surrogate closed-form Gaussian latent divergence without dynamics density estimation.
    \item We introduce State Belief Gaussian Fitting to impose structure on the latent representations, and theoretically demonstrate that the proposed surrogate objective serves as a bounded approximation of the intractable consequence divergence.
    \item Empirically, StBR achieves superior performance across diverse offline RL benchmarks.
\end{itemize}

\begin{figure}
  \centering
  \includegraphics[width=1.0\columnwidth]{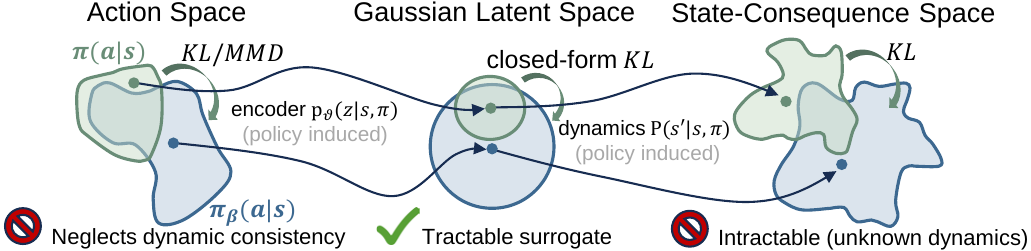} 
  \caption{Conceptual illustration of divergence-based behavior regularization. Purely action distribution matching (left) neglects dynamic consistency, whereas ideal consequence-aware constraints (right) are computationally intractable without access to the true dynamics. Our StBR (middle) employs a Gaussian latent space as a tractable, closed-form surrogate of consequence-aware constraints.}
  \label{fig:sbr_concept} 
\end{figure}

\section{Preliminaries and Motivation}

\subsection{Reinforcement Learning}
In RL, we generally model the environment as a Markov Decision Process (MDP) $\mathcal{M}=(\mathcal{S},\mathcal{A},P,R,\gamma,\rho_0)$, with state space $\mathcal{S}$, action space $\mathcal{A}$, transition dynamics $P:\mathcal{S}\times\mathcal{A}\to\Delta(\mathcal{S})$, reward function $R:\mathcal{S}\times\mathcal{A}\to[ 0, R_{\max }]$, discount factor $\gamma \in [0, 1)$, and initial state distribution $\rho_0$, where $\Delta(\cdot)$ denotes the probability simplex. The agent interacts with the environment and seeks a policy $\pi:\mathcal{S}\to\Delta(\mathcal{A})$ to maximize the expected discounted return $J(\pi):=\mathbb{E}_{s_0\sim\rho_0,a_t\sim\pi(\cdot|s_t),s_{t+1}\sim P(\cdot|s_t,a_t)}[\sum_{t=0}^\infty\gamma^t R_t(s_t,a_t)]$. The objective of reinforcement learning is to find an optimal policy: $\pi^\star:=\operatorname{argmax}_\pi J(\pi).$ It will be useful to define the \textit{policy-specific} state-action value function (Q-value function $Q^{\pi}:\mathcal{S}\times\mathcal{A}\to\mathbb{R}$) as $Q^{\pi}(s,a)=\mathbb{E}_\pi[\sum_{t=0}^{\infty}\gamma^{t}R_t(s_t,a_t)|s_{0}=s,a_{0}=a]$. 
Then we define the discounted state occupancy for convenience $d^\pi(s) = (1-\gamma)\sum^{\infty}_{t=0}\gamma^t Pr(s_t=s; \pi,\rho_0)$. 

In large and complex state-action spaces, function approximation is typically employed to model the value network $Q_\theta$ with parameters $\theta$ and the policy network $\pi_\phi$ with parameters $\phi$. In practice, an off-policy iterative approach commonly initializes parameters arbitrarily, and iteratively updates the parameterized function with an experience replay dataset $\mathcal{D}$ and a target network $Q_{\bar\theta}$~\citep{dqn}:
	\begin{align}
        \mathcal{L}_Q(\theta) =\underset{(s,a,r,s') \sim \mathcal{D}}{\mathbb{E}} \left[ Q_\theta(s, a) - r - \gamma Q_{\bar\theta}(s', \pi_\phi) \right]^2,
      \label{eq:empirical_T}
	\end{align}
where $Q_{\bar\theta}(s',\pi_\phi)$ is shorthand for $\mathbb{E}_{a'\sim \pi_\phi(\cdot|s')}[Q_{\bar\theta}(s' , a')]$. The policy $\pi_\phi$ can be updated by:
	\begin{align}
        \mathcal{L}_\pi(\phi)=\mathbb{E}_{s\sim\mathcal{D}, a\sim \pi_\phi(\cdot|s)}[-Q_\theta(s,a)].
      \label{eq:empirical_pi}
	\end{align}

\subsection{Offline Reinforcement Learning}
In offline RL, the agent has no direct interactive access to the environment but only to a fixed dataset $\mathcal{D} = \{(s_i, a_i, r_i, s'_i)\}_{i=1}^N$ collected by an unknown behavior policy $\pi_\beta$. The fundamental challenge in offline RL is the \textit{distribution shift}: querying the learned value function on OOD actions can lead to severe overestimation.

\subsection{Motivation: Rethinking Behavior Conservatism}
A prominent category of offline RL methods follows the key insight of maintaining conservatism in policy learning, which constrains the learned policy to remain close to the dataset by avoiding OOD actions. Existing methods typically achieve this by regularizing the action distribution of the learned policy $\pi_\phi$ toward the behavior policy $\pi_\beta$ via a divergence measure:
\begin{align}
     \mathcal{L}_\pi(\phi)=\mathbb{E}_{s\sim\mathcal{D}, a\sim \pi_\phi(\cdot|s)}\left[-Q_\theta(s,a) + \lambda D\big(\pi_\phi(a|s), \pi_\beta(a|s)\big)\right],
\end{align}
where $\lambda$ is the Lagrange multiplier, $D$ is a chosen divergence measure, such as Kullback-Leibler (KL) divergence in BRAC \& AWAC~\citep{brac,awac}, Maximum Mean Discrepancy (MMD) in BEAR~\citep{bear}, or Mean Squared Error (MSE) in TD3BC~\citep{td3bc}. However, enforcing regularizations purely on the action distribution is blind to the underlying dynamic consistency structure, leading to a critical issue:

\textbf{Consequence-shift vulnerability:} In dynamics-sensitive regions, even minor action deviations that incur a minimal divergence penalty can trigger drastically different transitions, inadvertently propelling the agent into OOD next states (consequences).

This limitation suggests that distribution shift is a problem related to \textit{dynamic consistency}. Therefore, rather than constraining action-distributions, we propose constraining their induced state-consequence distributions $P_\pi(s'|s) := \mathbb{E}_{a \sim \pi(\cdot|s)}P(s'|s, a)$. We formulate this constraint using KL divergence, as its zero-forcing property strictly penalizes visiting out-of-support states, naturally preventing OOD extrapolation. Our ideal objective is thus:
\begin{align}\label{eq:cc}
    \mathcal{L}_\pi(\phi)= \mathbb{E}_{s\sim\mathcal{D}, a\sim \pi_\phi(\cdot|s)}\left[-Q_\theta(s,a) + \lambda D_{\mathrm{KL}}\big(P_{\pi_\phi}(s'|s) \| P_{\pi_\beta}(s'|s)\big)\right].
\end{align}  

\textbf{The Tractability Challenge.} Directly optimizing Eq.~\eqref{eq:cc} is computationally intractable in the standard offline RL setting, as enforcing such a constraint requires explicit density estimation of the true underlying dynamics $P$. This intractability directly motivates our Structural Behavior Regularization (StBR) framework, detailed in the following section.


\section{Method}
In this section, we present the Structural Behavior Regularization (StBR) framework. We first introduce a latent variable to derive a tractable surrogate divergence (Section~\ref{sec:3.1}). Next, we propose the State Belief Gaussian Fitting (SBGF) technique to structure this latent space (Section~\ref{sec:3.2}), provide theoretical guarantees for this approximation (Section~\ref{sec:theory_bound}), and finally integrate the surrogate divergence into offline RL policy optimization (Section~\ref{sec:3.4}).

\subsection{Tractable Surrogate via Gaussian Latent Divergence}\label{sec:3.1}
We begin by introducing a latent variable $z$ as a sufficient statistic of the state-action pair $(s,a)$ for predicting the next state $s'$. Specifically, we assume the dynamics follow the causal diagram:
\begin{align}
    (s,a) \to z \to s'.
\end{align}
This causal structure implies conditional independence, $(s,a) \perp \! \! \! \perp s' | z$ which naturally decouples the transition dynamics as $P(s'|s,a) = \int_z p(s'|z)p(z|s,a) \, dz$. Crucially, this decomposition provides a theoretical motivation formalized in Proposition~\ref{proposition1}. The detailed proof can be found in Appendix~\ref{sec:app_p1}. Let the subscript $\pi$ denote expectation over $a \sim \pi(\cdot|s)$ (e.g., $p_\pi(z|s) = \mathbb{E}_{a \sim \pi}[p(z|s,a)]$).
\begin{proposition}[Surrogate Consequence Divergence]\label{proposition1}
Assume $z$ is a latent variable acting as a sufficient statistic for the state-action pair $(s, a)$ to predict the next state $s'$, forming the causal diagram $(s, a) \to z \to s'$. For any policy $\pi$ and behavior policy $\pi_\beta$, the KL divergence between their consequence state distributions is upper-bounded by the KL divergence between their corresponding latent distributions:
\begin{align}\label{eq:surrogate}
    D_{\mathrm{KL}} \big( P_{\pi}(s'|s) \| P_{\pi_\beta}(s'|s) \big) \le D_{\mathrm{KL}} \big( p_{\pi}(z|s) \| p_{\pi_\beta}(z|s) \big).
\end{align}
\end{proposition}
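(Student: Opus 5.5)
The proof is a data-processing inequality for KL divergence: both consequence distributions arise from their latent distributions by pushing them through the \emph{same} Markov kernel $p(s'|z)$, and a common stochastic channel can only shrink KL. The first step is to write both sides in mixture form. By the conditional independence $(s,a)\perp\!\!\!\perp s'\mid z$ we have $P(s'|s,a)=\int_z p(s'|z)p(z|s,a)\,dz$. Averaging over $a\sim\pi(\cdot|s)$ and exchanging the order of integration (Fubini/Tonelli applies since everything is nonnegative) gives
\begin{align*}
P_\pi(s'|s)=\int_z p(s'|z)\,p_\pi(z|s)\,dz,\qquad P_{\pi_\beta}(s'|s)=\int_z p(s'|z)\,p_{\pi_\beta}(z|s)\,dz .
\end{align*}
The key point to emphasize is that the decoder $p(s'|z)$ does not depend on the policy. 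This is exactly what the sufficient-statistic assumption buys.

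The second step is to pass to joint distributions over $(z,s')$. Define $q_\pi(z,s'|s)=p_\pi(z|s)p(s'|z)$ and similarly $q_{\pi_\beta}$. Because the conditional factor is shared, the chain rule for KL gives
\begin{align*}
D_{\mathrm{KL}}(q_\pi\|q_{\pi_\beta}) = D_{\mathrm{KL}}\big(p_\pi(z|s)\|p_{\pi_\beta}(z|s)\big) + \mathbb{E}_{z\sim p_\pi}\big[D_{\mathrm{KL}}(p(\cdot|z)\|p(\cdot|z))\big],
\end{align*}
and the second term is zero. Applying the chain rule in the other order gives
\begin{align*}
D_{\mathrm{KL}}(q_\pi\|q_{\pi_\beta}) = D_{\mathrm{KL}}\big(P_\pi(s'|s)\|P_{\pi_\beta}(s'|s)\big) + \mathbb{E}_{s'\sim P_\pi}\big[D_{\mathrm{KL}}(q_\pi(z|s',s)\|q_{\pi_\beta}(z|s',s))\big].
\end{align*}
Here the second term is nonnegative. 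Equating the two expressions yields \eqref{eq:surrogate}.

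As an alternative to the chain rule, one can invoke the log-sum inequality (equivalently, joint convexity of $x\log(x/y)$) pointwise in $s'$ inside the integral over $z$.

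The main obstacle is measure-theoretic hygiene rather than the inequality itself. If $p_\pi(z|s)\not\ll p_{\pi_\beta}(z|s)$, the right-hand side is $+\infty$ and the bound is trivial. Otherwise absolute continuity passes to $P_\pi\ll P_{\pi_\beta}$, and the conditional densities $q(z|s',s)$ are well defined via the disintegration of the joint. I would state this case split explicitly, and assume densities with respect to a common dominating measure so that the chain rule is legitimate.
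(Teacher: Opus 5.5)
Your proposal is correct and follows essentially the same route as the paper. The paper also forms the joint $p_\pi(z,s'|s)=p_\pi(z|s)p(s'|z)$ with the policy-independent decoder and shows its KL equals the latent KL; it then refactorizes via Bayes' rule into the consequence KL plus the expected posterior KL $\mathbb{E}_{s'\sim P_\pi}[D_{\mathrm{KL}}(p_\pi(z|s,s')\|p_{\pi_\beta}(z|s,s'))]$ and drops that nonnegative term (this gap term is what Theorem~\ref{thm:surrogate_bound} later reuses as Term (I)), while your absolute-continuity case split is a small rigor addition the paper omits.
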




Proposition~\ref{proposition1} provides a theoretical foundation for enforcing consequence conservatism entirely within the latent space. To make this distribution matching computationally tractable, we structurally design the representation bottleneck as a Gaussian latent space. This choice effectively transforms the intractable problem into an analytical, closed-form KL divergence computation between two Gaussian distributions in the latent space. Consequently, the burden of intractable dynamics density estimation is shifted to a tractable representation learning task, which we address in the next section.

\subsection{State Belief Gaussian Fitting}\label{sec:3.2}

To implement the representation learning task outlined above and enforce the causal structure $(s,a) \to z \to s'$, we build upon the Variational Information Bottleneck (VIB) principle~\citep{vib}. Specifically, our objective is to maximize the mutual information between the latent representation $Z$ and the next state $S'$, while minimizing the information between $Z$ and the input pair $(S, A)$:
\begin{align}
    \max I(Z; S') - \beta I(Z; S, A),
\end{align}
where we use the chain rule of mutual information to expand $I(Z; S, A) = I(Z; S) + I(Z; A | S)$. To optimize this mutual information objective, we derive a tractable objective, detailed in Appendix~\ref{sec:app_p2}. 

\begin{proposition}[State Belief Gaussian Fitting based on VIB]\label{proposition2}
Let $Z$ be a stochastic latent representation of a state-action pair $(S, A)$ generated by an encoder $p_\vartheta(z|s, a)$. Let $q_\psi(s'|z)$ be a variational decoder, $p_\varphi(z|s)$ be a state belief, and $p(z)$ be a fixed marginal prior (e.g., $\mathcal{N}(0, I)$). The VIB objective $\max  I(Z; S') - \beta I(Z; S, A)$ can be optimized via the following approximation objective:
\begin{align}\label{eq:sbvib}
    \mathcal{L}_{\mathrm{SBGF}}(\psi, \vartheta, \varphi) = \mathbb{E}_{(s,a,s') \sim \mathcal{D}} \Big[ &\underbrace{\mathbb{E}_{z \sim p_\vartheta(\cdot|s,a)} [-\log q_\psi(s' | z)]}_{\text{Reconstruction}} \nonumber \\
    &+ \beta \Big( \underbrace{D_{\mathrm{KL}}(p_\vartheta(z|s,a) \| p_\varphi(z|s))}_{\text{Bidirectional Action IB}} + \underbrace{D_{\mathrm{KL}}(p_\varphi(z|s) \| p(z))}_{\text{State IB}} \Big) \Big]. 
\end{align}
\end{proposition}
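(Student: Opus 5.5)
The plan is the standard VIB derivation of \citet{vib}: bound each mutual-information term by a variational quantity, using the chain rule $I(Z;S,A)=I(Z;S)+I(Z;A|S)$ to split the compression term. Throughout, the joint is $p(s,a,s')\,p_\vartheta(z|s,a)$, which respects the Markov chain $(S,A)\to Z\to S'$. Maximizing $I(Z;S')-\beta I(Z;S,A)$ is equivalent to minimizing its negation. So it suffices to show that $\mathcal{L}_{\mathrm{SBGF}}$, up to an additive constant independent of the parameters, upper-bounds $-I(Z;S')+\beta I(Z;S,A)$.

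\textbf{Relevance term.} First I would write
\begin{align*}
I(Z;S')=\mathbb{E}[\log p(s'|z)]+H(S').
\end{align*}
Here $p(s'|z)$ is the intractable posterior induced by the encoder. Replacing it with $q_\psi(s'|z)$ costs a nonnegative gap, since $\mathbb{E}_z\,D_{\mathrm{KL}}(p(s'|z)\|q_\psi(s'|z))\ge 0$. This gives
\begin{align*}
I(Z;S')\ge \mathbb{E}_{(s,a,s')\sim\mathcal{D}}\,\mathbb{E}_{z\sim p_\vartheta(\cdot|s,a)}[\log q_\psi(s'|z)]+H(S').
\end{align*}
The entropy $H(S')$ is a data constant. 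The Markov chain is what lets us sample $z$ from $p_\vartheta(\cdot|s,a)$ jointly with $s'$ from the data. The result is the reconstruction term.

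\textbf{Compression terms.} For the conditional term, $I(Z;A|S)=\mathbb{E}_{s,a}D_{\mathrm{KL}}(p_\vartheta(z|s,a)\|p(z|s))$, where $p(z|s)=\mathbb{E}_{a\sim\pi_\beta}p_\vartheta(z|s,a)$. For any surrogate $p_\varphi(z|s)$ we have
\begin{align*}
\mathbb{E}_{s,a}D_{\mathrm{KL}}(p_\vartheta(z|s,a)\|p_\varphi(z|s))=I(Z;A|S)+\mathbb{E}_s D_{\mathrm{KL}}(p(z|s)\|p_\varphi(z|s)),
\end{align*}
so $I(Z;A|S)$ is bounded above by the action-IB term. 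Similarly, $I(Z;S)=\mathbb{E}_s D_{\mathrm{KL}}(p(z|s)\|p(z))-D_{\mathrm{KL}}(p(z)\|p_{\text{true}}(z))$, which is at most $\mathbb{E}_s D_{\mathrm{KL}}(p(z|s)\|p(z))$ for the fixed prior $p(z)$.

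\textbf{Main obstacle.} The remaining quantity $\mathbb{E}_s D_{\mathrm{KL}}(p(z|s)\|p(z))$ still contains the intractable aggregate $p(z|s)$. We want to replace it by $D_{\mathrm{KL}}(p_\varphi(z|s)\|p(z))$. This is not an exact upper bound in general, which is why the statement says ``approximation.''

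I would handle it as follows. Use the identity
\begin{align*}
D_{\mathrm{KL}}(p(z|s)\|p(z))=D_{\mathrm{KL}}(p(z|s)\|p_\varphi(z|s))+\mathbb{E}_{p(z|s)}\left[\log\tfrac{p_\varphi(z|s)}{p(z)}\right].
\end{align*}
The first term on the right is already paid for by the slack in the action-IB bound. The second term coincides with $D_{\mathrm{KL}}(p_\varphi(z|s)\|p(z))$ exactly when $p_\varphi(z|s)=p(z|s)$. That equality is precisely the state-belief fitting condition that the action-IB term drives toward, and that matching is the sense in which ``bidirectional'' is meant. I would state the resulting objective as tight at this optimum and as a first-order approximation otherwise.

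Summing the three pieces with weight $\beta$ and dropping constants then yields $\mathcal{L}_{\mathrm{SBGF}}$ in Eq.~\eqref{eq:sbvib}.
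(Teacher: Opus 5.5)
Your proposal is correct and follows the paper's proof essentially step for step. It uses the same variational lower bound on $I(Z;S')$, the same chain-rule split with the exact slack $\mathbb{E}_s D_{\mathrm{KL}}(p(z|s)\|p_\varphi(z|s))$ bounding $I(Z;A|S)$, and the same prior-replacement bound on $I(Z;S)$. It also makes the same admission that substituting $p_\varphi(z|s)$ for the aggregate $p(z|s)$ in the state term is a heuristic, justified by the action-IB term fitting one to the other. Your identity is a slightly sharper accounting than the paper's, since it shows the only inexact step is replacing $\mathbb{E}_{p(z|s)}$ by $\mathbb{E}_{p_\varphi(z|s)}$ in $\log\frac{p_\varphi(z|s)}{p(z)}$.

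There are two small slips. First, the gap in your $I(Z;S)$ identity is $D_{\mathrm{KL}}(p_{\text{true}}(z)\|p(z))$, not the reverse direction; only its nonnegativity is used, so nothing breaks. Second, the resulting error $\mathbb{E}_s\int (p_\varphi(z|s)-p(z|s))\log\frac{p_\varphi(z|s)}{p(z)}\,dz$ is linear in the mismatch, so the objective is exact at $p_\varphi(z|s)=p(z|s)$ but not first-order accurate away from it, which makes ``first-order approximation'' an overstatement.
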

To estimate the latent divergence in Eq.~\eqref{eq:surrogate}, the encoder $p_\vartheta(z|s,a)$ and decoder $q_\psi(s'|z)$ model environment dynamics, yielding the target policy's distribution by taking the expectation over actions: $\hat p_{\pi}(z|s)=\mathbb{E}_{a\sim\pi(\cdot|s)} p_\vartheta(z|s,a)$. Meanwhile, the state belief encoder $p_\varphi(z|s)$ directly captures the behavior policy's consequences, providing $\hat p_{\pi_\beta}(z|s)= p_{\varphi}(z|s)$.

The core innovation of our SBGF lies in the \textit{Bidirectional Action IB} term. Directly integrating the encoder over the actions of the behavior policy would yield a complex Gaussian Mixture Model, making the analytical calculation of latent divergence in Eq.~\eqref{eq:surrogate} intractable. To circumvent this, the bidirectional constraint fits the aggregated behavioral representations into a Gaussian state belief ($p_\varphi$), which simultaneously serves to regularize the encoder.

Through this representation learning framework, we establish the structured Gaussian latent space that enables the analytical computation of the surrogate divergence. However, a critical question naturally arises: \textit{when using this surrogate divergence to approximate the true consequence divergence, how large is the resulting discrepancy?} We formally analyze this gap in the following section.

\subsection{Theoretical Guarantee: Bounding the Surrogate Discrepancy} \label{sec:theory_bound}
To address the question raised above, we now formalize a theoretical upper bound on the gap between the true consequence divergence and our surrogate latent divergence estimated from the dataset.

\begin{theorem}[Surrogate Discrepancy Bound] \label{thm:surrogate_bound}
Let $\hat p_{\pi}(z|s)=\mathbb{E}_{a\sim\pi(\cdot|s)} p_\vartheta(z|s,a)$ for any given policy $\pi$, and let $\hat p_{\pi_\beta}(z|s)= p_{\varphi}(z|s)$ be the models estimated by SBGF in Eq.~\eqref{eq:sbvib} on an offline dataset $\mathcal{D}$ of size $N$. With probability at least $1 - \delta$, the following bound holds for the discrepancy between the true consequence divergence and the empirical surrogate divergence:
\begin{small}
\begin{align}
    &\left| \mathbb{E}_{s \sim d^{\pi_\beta}} \big[ D_{\mathrm{KL}}(P_{\pi}(s'|s) \| P_{\pi_\beta}(s'|s)) \big] - \mathbb{E}_{s \sim \mathcal{D}} \big[ D_{\mathrm{KL}}(\hat p_{\pi}(z|s)  \| \hat p_{\pi_\beta}(z|s)) \big] \right| \label{eq:theorem1} \\
    \le &\underbrace{\mathbb{E}_{s,s'} \left[ D_{\mathrm{KL}}(p_{\pi}(z|s,s') \| p_{\pi_\beta}(z|s,s')) \right]}_{\substack{\text{\textbf{(I) Dynamics Info Compression (Bias)}} \\ \text{\textbf{(Increases as }} \beta ~\uparrow \text{\textbf{)}}}}  
    + \underbrace{\mathcal{O}\left( \sqrt{\frac{\mathrm{Var}_s(D_{\mathrm{KL}}(p_{\pi}(z|s)  \| p_{\pi_\beta}(z|s)) \log(1/\delta)}{N}} \right)}_{\substack{\text{\textbf{(II) Estimation Error (Variance)}} \\ \text{\textbf{(Decreases as }} \beta ~\uparrow \text{\textbf{)}}}} 
    + \underbrace{C_\pi \epsilon_{\mathrm{opt},\beta}.}_{\text{\textbf{(III) Policy Shift}}} \nonumber
\end{align}
\end{small}
where the policy ratio in the latent space is bounded by $C_\pi = \max_{s \in \mathcal{D}, z} \frac{p_\pi(z|s)}{p_{\pi_\beta}(z|s)}$, and $\epsilon_{\mathrm{opt},\beta}$ is an error related to the bidirectional action IB loss in Eq.~\eqref{eq:sbvib} and fundamental approximation limit of fitting a single Gaussian to a potentially multimodal behavior-induced latent distribution. 
\end{theorem}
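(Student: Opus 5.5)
The plan is to split the gap with a triangle inequality through two intermediate quantities: the \emph{population} latent divergence under the true latent posteriors, $\mathbb{E}_{s\sim d^{\pi_\beta}}[D_{\mathrm{KL}}(p_\pi(z|s)\|p_{\pi_\beta}(z|s))]$, and its empirical average $\mathbb{E}_{s\sim\mathcal{D}}[D_{\mathrm{KL}}(p_\pi(z|s)\|p_{\pi_\beta}(z|s))]$. Writing $A$ for the true consequence term, $B$ for the population latent term, $\hat B$ for its empirical version, and $\hat C$ for the surrogate built from $\hat p_\pi,\hat p_{\pi_\beta}$, we bound
\begin{align*}
|A-\hat C|\le |A-B| + |B-\hat B| + |\hat B-\hat C|,
\end{align*}
and match these three pieces to terms (I), (II), (III) respectively.

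For (I), we refine the data-processing argument behind Proposition~\ref{proposition1} into an equality via the chain rule of KL. Consider the joint laws $p_\pi(z,s'|s)$ and $p_{\pi_\beta}(z,s'|s)$. Decomposing in the order $s'$ first gives $D_{\mathrm{KL}}(P_\pi(s'|s)\|P_{\pi_\beta}(s'|s))+\mathbb{E}_{s'}[D_{\mathrm{KL}}(p_\pi(z|s,s')\|p_{\pi_\beta}(z|s,s'))]$. Decomposing in the order $z$ first, the term $D_{\mathrm{KL}}(p(s'|z)\|p(s'|z))$ vanishes because $s'$ depends only on $z$ under the causal diagram, leaving $D_{\mathrm{KL}}(p_\pi(z|s)\|p_{\pi_\beta}(z|s))$. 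Subtracting the two decompositions gives $B-A=\mathbb{E}_{s,s'}[D_{\mathrm{KL}}(p_\pi(z|s,s')\|p_{\pi_\beta}(z|s,s'))]\ge0$, which is exactly term (I). This also explains why the term grows with $\beta$: more compression leaves more residual uncertainty in $z$ given $s'$.

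For (II), $\hat B$ is an average of $N$ samples of $f(s)=D_{\mathrm{KL}}(p_\pi(z|s)\|p_{\pi_\beta}(z|s))$. We will treat dataset states as (approximately) i.i.d.\ draws from $d^{\pi_\beta}$. Assuming $f$ is bounded, which follows from $C_\pi<\infty$ since $f\le\log C_\pi$, Bernstein's inequality gives, with probability at least $1-\delta$,
\begin{align*}
|B-\hat B|\le \sqrt{\frac{2\mathrm{Var}_s(f)\log(2/\delta)}{N}}+\mathcal{O}\!\left(\frac{\log C_\pi\log(1/\delta)}{N}\right).
\end{align*}
The lower-order term is absorbed into the $\mathcal{O}(\cdot)$ of (II). The remark that the variance decreases with $\beta$ is heuristic: a stronger IB pulls all latents toward the prior, which shrinks $f$ and hence its variance.

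For (III), we compare $D_{\mathrm{KL}}(p_\pi\|p_{\pi_\beta})$ with $D_{\mathrm{KL}}(\hat p_\pi\|\hat p_{\pi_\beta})$ pointwise in $s\in\mathcal{D}$. The dominant error is replacing $p_{\pi_\beta}(z|s)$, the mixture $\mathbb{E}_{a\sim\pi_\beta}p_\vartheta(z|s,a)$, by the single Gaussian $p_\varphi(z|s)$. Writing the difference as $\mathbb{E}_{p_\pi}[\log(p_{\pi_\beta}/\hat p_{\pi_\beta})]$ plus encoder-mismatch terms, we change measure from $p_\pi$ to $p_{\pi_\beta}$ at the cost of the density ratio, which is at most $C_\pi$. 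This bounds the difference by $C_\pi\,\mathbb{E}_{p_{\pi_\beta}}|\log(p_{\pi_\beta}/p_\varphi)|$. That expectation is controlled by the Bidirectional Action IB loss: by convexity of KL, the mixture-versus-Gaussian KL is at most $\mathbb{E}_{a\sim\pi_\beta}D_{\mathrm{KL}}(p_\vartheta(z|s,a)\|p_\varphi(z|s))$, plus the irreducible single-Gaussian fitting gap. Together these define $\epsilon_{\mathrm{opt},\beta}$.

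The main obstacle is this last step. Turning a KL control into control of the \emph{absolute} log-ratio requires either bounded log-ratios or a Pinsker-type step, so the definition of $\epsilon_{\mathrm{opt},\beta}$ has to be kept loose enough to absorb it. I would first try defining $\epsilon_{\mathrm{opt},\beta}$ directly as $\sup_{s\in\mathcal{D}}\mathbb{E}_{p_{\pi_\beta}}|\log(p_{\pi_\beta}/p_\varphi)|$, noting that it is upper-bounded by the IB loss plus the mixture-fit error.
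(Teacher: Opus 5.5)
Your three-way split, the two-order chain rule for KL that gives term (I) as an exact identity, Bernstein for (II), and the change of measure at cost $C_\pi$ in (III) are the same steps as in the paper's proof.

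One thing to tighten in (III): the ``encoder-mismatch terms'' must be identically zero, not merely subdominant. The paper takes $p_\pi(z|s)$ and $p_{\pi_\beta}(z|s)$ to be action-marginals of the learned encoder $p_\vartheta$ itself. You do this implicitly when you call $p_{\pi_\beta}$ the mixture $\mathbb{E}_{a\sim\pi_\beta}p_\vartheta(z|s,a)$. Then $\hat p_\pi=p_\pi$, and $D_{\mathrm{KL}}(p_\pi\|p_{\pi_\beta})-D_{\mathrm{KL}}(\hat p_\pi\|p_\varphi)=\mathbb{E}_{p_\pi}[\log(p_\varphi/p_{\pi_\beta})]$ exactly. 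If you keep a separate ``true'' latent, nothing on the right-hand side controls the mismatch. The consistent reading assumes that $p_\vartheta$ itself realizes $(s,a)\to z\to s'$. Under that reading, term (I), the variance in (II), and $C_\pi$ all refer to the same $p_\pi,p_{\pi_\beta}$.

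The step that fails as written is your resolution of the obstacle you flagged. You claim that $\mathbb{E}_{p_{\pi_\beta}}|\log(p_{\pi_\beta}/p_\varphi)|$ is bounded by the IB loss plus the mixture-fit error; this is false. Take a deterministic behavior action with $p_\vartheta(z|s,a)=p_{\pi_\beta}(z|s)=\mathcal{N}(0,1)$ and $p_\varphi(z|s)=\mathcal{N}(\mu,1)$. The fitting gap is zero and the Bidirectional Action IB term is $\mu^2/2$. The left side equals $\mathbb{E}_{z\sim\mathcal{N}(0,1)}|\mu^2/2-\mu z|\ge|\mu|\sqrt{2/\pi}-\mu^2/2$, which exceeds $\mu^2/2$ whenever $0<|\mu|<\sqrt{2/\pi}$. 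In general the absolute log-ratio scales like $\sqrt{D_{\mathrm{KL}}}$, not $D_{\mathrm{KL}}$.

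The paper closes this step as follows:
\begin{enumerate}
\item Write $|x|=x-2x\,\mathbb{I}_{\{x<0\}}$ and use $-\log(p/q)\le q/p-1$ on $\{p<q\}$.
\item This gives $\int p\,|\log(p/q)|\le D_{\mathrm{KL}}(p\|q)+\|p-q\|_1\le D_{\mathrm{KL}}(p\|q)+\sqrt{2D_{\mathrm{KL}}(p\|q)}$ by Pinsker.
\item Averaging over $s\sim\mathcal{D}$ with Jensen on $\sqrt{\cdot}$ yields $\epsilon_{\mathrm{opt},\beta}=\bar K+\sqrt{2\bar K}$, with $\bar K=\mathbb{E}_{s\sim\mathcal{D}}[D_{\mathrm{KL}}(p_{\pi_\beta}(z|s)\|p_\varphi(z|s))]$.
\item The paper then splits $\bar K$ into a realizability gap (best single Gaussian) and an optimization gap.
\end{enumerate}
Your convexity bound $\bar K\le\mathbb{E}_{s\sim\mathcal{D},\,a\sim\pi_\beta(\cdot|s)}[D_{\mathrm{KL}}(p_\vartheta(z|s,a)\|p_\varphi(z|s))]$ is a clean, direct link to the Bidirectional Action IB term. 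However, the resulting control is $\bar K+\sqrt{2\bar K}$, not linear in $\bar K$.

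A smaller point in (II): $f\le\log C_\pi$ needs the density-ratio bound on the whole support of $d^{\pi_\beta}$. As stated, $C_\pi$ is a maximum over the random dataset states only. The paper instead simply assumes a range constant $D_{\max}$.
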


\textbf{Remark.} 
Theorem~\ref{thm:surrogate_bound} reveals that our surrogate approximation is systematically governed by a fundamental Bias-Variance trade-off dictated by the information compression parameter $\beta$. \textit{(Note that an excessively large $\beta$ forces the latent $z$ to collapse to the prior, losing all dynamics prediction capability. Thus, the meaningful trade-off occurs in the regime of relatively small $\beta$.)} The detailed proof can be found in Appendix~\ref{sec:app_t1}.

\textbf{Term (I) acts as the Bias.} When $\beta$ is small, the mapping $z \to s'$ is near-deterministic. Consequently, $p_\pi(z|s,s')$ is largely policy-independent, keeping this bias low. As $\beta$ increases, compression maps multiple $z$ to the same $s'$, exposing policy differences and increasing the bias.

\textbf{Term (II) acts as the Variance.} 
It captures the cross-state variance of the latent divergence defined in Eq.~\eqref{eq:surrogate}. Increasing $\beta$ acts as a strong regularizer, smoothing out action details to effectively suppress the fluctuation of $D_{\text{KL}}(p_\pi(z|s) \| p_{\pi_\beta}(z|s))$.


\textbf{Term (III) highlights the conservatism.} The policy ratio $C_\pi$ indicates that this surrogate becomes ineffective when evaluating OOD actions, directly motivating the necessity of policy constraint.

This theoretical insight provides a principled \textit{guideline for empirical $\beta$ tuning}: a larger $\beta$ mitigates variance in noisy or stochastic environments, whereas a smaller $\beta$ preserves dynamics and minimizes bias in near-deterministic settings. Ultimately, Theorem~\ref{thm:surrogate_bound} mathematically justifies the SBGF design, establishing a tractable surrogate for the policy optimization detailed in the next section.

\begin{figure}
  \centering
  \includegraphics[width=0.8\columnwidth]{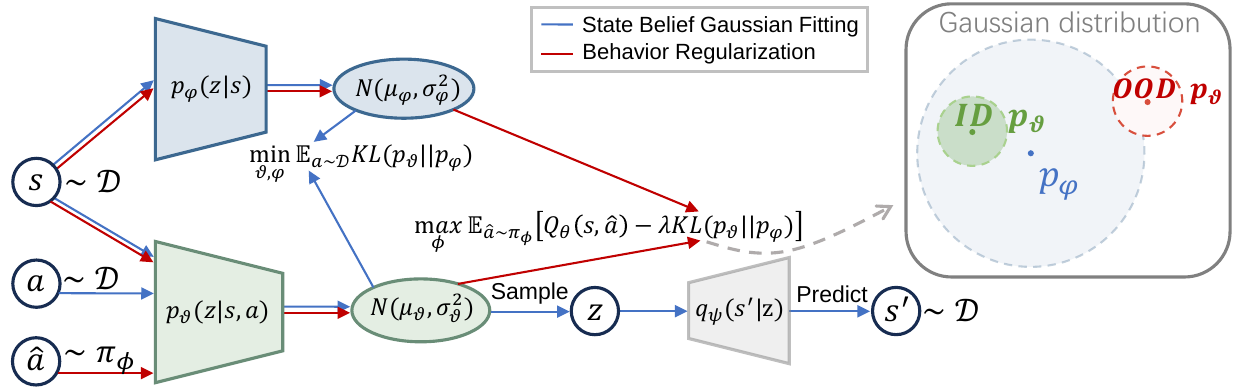}
\caption{The architecture of the proposed StBR framework.The blue flow represents the learning of latent representations (Eq.~\eqref{eq:sbvib}) through the proposed State Belief Gaussian Fitting. The red flow highlights the application of the surrogate latent divergence for behavior regularization in Eq.~\eqref{eq:pi_practical}.}
  \label{fig:sbr_frame}
\end{figure}

\subsection{Implementation: Structural Behavior Regularization}\label{sec:3.4}
The surrogate divergence derived above can be plugged into standard off-policy algorithms with minimal modifications. We choose TD3~\cite{td3bc,awac,spot} as our base RL algorithm to verify the effectiveness of the proposed StBR framework.

\textbf{Point-wise Constraint.} While the surrogate divergence can be directly computed when $\pi_\phi$ is parameterized as a simple Gaussian or deterministic policy, it becomes intractable for more expressive generative model policies (e.g., flow policies~\citep{fql}). A practically feasible alternative is to instead minimize its point-wise upper bound:
\begin{equation}
    D_{\mathrm{KL}}(\hat{p}_{\pi_\phi}(z|s) \| \hat{p}_{\pi_\beta}(z|s)) \le \mathbb{E}_{a\sim{\pi_\phi}(\cdot|s)} \big[D_{\mathrm{KL}}(p_\vartheta(z|s, a) \| p_\varphi(z|s))\big].
\end{equation}
Beyond computational efficiency, this surrogate objective avoids a risk-dilution issue where averaging over multiple samples can mask OOD actions. The proof sketch of Theorem~\ref{thm:surrogate_bound} in Appendix \ref{sec:app_t1s} details this mechanism.

\textbf{Closed-form Gaussian Latent Divergence.} Leveraging the Gaussian models $p_\vartheta(z|s,a) = \mathcal{N}(\mu_\vartheta, \sigma_\vartheta|s,a)$ and $p_\varphi(z|s) = \mathcal{N}(\mu_\varphi, \sigma_\varphi|s)$ learned via SBGF in Eq.~\eqref{eq:sbvib}, the point-wise divergence can be computed analytically without sampling:
\begin{equation}
D_{\mathrm{KL}}(p_\vartheta(z|s, a) \| p_\varphi(z|s)) = -\frac{1}{2} \Big( 1 + \log\frac{\sigma_{\vartheta}^2}{\sigma_{\varphi}^2} - \frac{\sigma_{\vartheta}^2+(\mu_{\vartheta} - \mu_{\varphi})^2}{\sigma_{\varphi}^2} \Big) .
\end{equation}

\textbf{Policy Optimization.} 
Enforcing our constraint is computationally elegant, as it simply requires aligning the representations from two encoders. Fusing this lightweight latent divergence constraint with the standard Q-value objective, we update the policy $\pi_\phi$ by:
\begin{equation} \label{eq:pi_practical}
\mathcal{L}_\pi(\phi) = \mathbb{E}_{s \sim \mathcal{D},a \sim \pi_\phi(\cdot|s)} \big[ -Q_\theta(s, a) + \lambda D_{\mathrm{KL}}(p_\vartheta(z|s, a) \| p_\varphi(z|s)) \big],
\end{equation}
where $\lambda$ is a hyperparameter controlling the regularization strength.

\textbf{Overall algorithm.} StBR follows a standard actor-critic paradigm and optimizes the SBGF via Eq.~\eqref{eq:sbvib} right after the value and policy updates. Figure~\ref{fig:sbr_frame} gives the architecture of the proposed StBR framework, while the whole training process is shown in Appendix~\ref{sec:app_main}.

\section{Related Works}

\textbf{Behavior regularized offline RL.}
Behavior regularization has evolved from early MSE cloning to VAE-based divergence matching, and more recently toward high-expressivity diffusion and flow matching models~\citep{td3bc,cql,spot,cped,dql,fql}. These advancements focus on achieving increasingly precise modeling of the underlying behavior policy distribution to capture its full complexity.
However, such action-distribution matching paradigms remain agnostic to environmental dynamics, where minor action deviations can trigger significant state drift due to a lack of dynamic consistency.


\textbf{Consequence-aware offline RL.}
Several offline RL approaches can be viewed as consequence-aware by anticipating future states. For instance, POR~\citep{por} employs hierarchical state-planning with an inverse model, while ODAC~\citep{odac} constrains actions using explicit dynamics models and value uncertainty. Alternatively, out-of-sample state recovery methods (SDC~\citep{sdc} and OSR~\citep{osr}) utilize inverse dynamics or dynamics ensembles to regularize the policy via transition alignment, ensuring agents can recover to the data manifold, thereby enhancing generalization. Unlike these methods, which struggle in high-dimensional environments due to their reliance on computationally heavy explicit state predictions or ensemble models, our StBR imposes a lightweight consequence-aware policy constraint directly via a latent divergence.




\textbf{Other competitive offline RL methods.} 
Besides density matching, more types of competitive offline RL methods exist. 
Conservative Q-Learning (CQL)~\citep{cql} implicitly constrains action distributions via Q-value penalization. PBRL~\citep{pbrl} enforces pessimism using ensemble variance; however, alongside high computational costs, this uncertainty becomes uninformative in sparse-reward environments due to lack of reward signals. Conversely, IQL~\cite{iql} eliminates OOD queries via in-sample expectile regression, but its strict reliance on existing data limits generalization. We provide a broader discussion of additional related works in Appendix~\ref{sec:appp_related}.





\begin{table*}[t]
\centering
\caption{The mean and standard deviation of normalized scores after 1M steps of offline RL training, averaged over 5 seeds and the final 10 evaluations (the final 4 for AntMaze). The highest score in each task is bolded, and the second-best is underlined.}\label{tab:main_results}
\resizebox{\linewidth}{!}{
\begin{tabular}{ll|llllllllll|l}
\toprule
                             & Task Name      & BC        & CQL       & TD3BC     & IQL       & PBRL      & POR       & SDC       & SPOT      & FQL       & ENSP      & StBR (Ours) \\ \midrule
\multirow{3}{*}{halfcheetah} & medium         & 41.2±0.4  & 46.9±0.4  & 48.2±0.1  & 47.1±0.1  & 57.9±1.5  & 48.8±0.5  & 45.9±0.5  & \underline{58.4}±1.0  & 54.4±2.2  & 55.4±0.6  & \textbf{60.7}±0.7    \\
                             & medium-replay  & 32.9±3.4  & 45.3±0.3  & 44.6±0.1  & 44.3±0.3  & 45.1±8.0  & 43.5±0.9  & 45.1±0.5  & 52.2±1.2  & \underline{53.2}±0.6  & 43.4±1.5  & \textbf{54.4}±1.0    \\
                             & medium-expert  & 50.4±4.0  & 95.0±1.4  & 91.7±1.9  & 89.2±1.2  & 92.3±1.1  & 94.7±2.2  & 76.3±5.2  & 86.9±4.3  & \underline{95.3}±1.3  & 88.1±4.9  & \textbf{97.8}±0.2    \\ \hline
\multirow{3}{*}{hopper}      & medium         & 46.1±1.8  & 61.9±6.4  & 59.0±0.7  & 63.6±2.9  & 75.3±31.2 & 78.6±7.2  & 64.7±3.5  & \underline{86.0}±8.7  & 77.1±10.3 & 80.4±14.7 & \textbf{100.9}±1.0   \\
                             & medium-replay  & 22.4±5.9  & 86.3±7.3  & 65.4±14.0 & 95.4±4.9  & \underline{100.6}±1.0 & 98.9±2.1  & 94.8±6.5  & 100.2±1.0 & 79.5±4.5  & 91.0±12.9 & \textbf{100.8}±1.9   \\
                             & medium-expert  & 62.4±13.9 & 96.9±15.1 & 101.6±2.0 & 101.6±7.4 & \underline{110.8}±0.8 & 90.0±12.1 & 99.9±8.5  & 99.3±7.1  & 103.5±6.7 & 65.5±12.5 & \textbf{111.6}±0.9   \\ \hline
\multirow{3}{*}{walker2d}    & medium         & 62.0±7.7  & 79.5±3.2  & 83.8±0.3  & 80.2±2.8  & \textbf{89.6}±0.7  & 81.1±2.3  & 82.7±1.9  & 86.4±2.7  & 81.9±3.4  & 80.4±3.8  & \underline{87.2}±0.9    \\
                             & medium-replay  & 26.4±8,4  & 76.8±10.0 & 80.3±1.8  & 69.9±8.4  & 77.7±14.5 & 76.6±6.9  & 78.5±6.0  & \textbf{91.6}±2.8  & 76.4±5.6  & 78.5±6.9  & \underline{86.3}±1.5    \\
                             & medium-expert  & 81.1±12.5 & 109.1±0.2 & 110.1±0.3 & 109.6±7.4 & 110.1±0.3 & 109.1±0.7 & 109.2±1.4 & \textbf{112.0}±0.5 & \underline{111.3}±0.3 & 109.4±0.8 & \underline{111.3}±1.4   \\ \midrule
\multicolumn{2}{l}{MuJoCo Total}              & 424.9     & 697.7     & 684.7     & 700.9     & 759.4     & 721.3     & 697.1     & \underline{773.0}     & 732.5     & 707.1     & \textbf{811.0}       \\ \midrule
\multirow{6}{*}{antmaze}     & umaze          & 58.8±5.1  & 92.8±1.9  & 70.8±39.2 & 89.6±4.2  & 0.0±0.0   & 90.6±7.1  & 81.4±3.8  & 93.5±2.4  & \textbf{96.0}±2.0  & \underline{94.3}±3.0  & \textbf{96.0}±2.3    \\
                             & umaze-diverse  & 62.0±5.9  & 37.3±3.7  & 44.8±11.6 & 65.6±8.3  & 0.0±0.0   & 71.3±12.1 & 49.6±10.4 & 40.7±5.1  & \textbf{89.0}±5.0  & 69.1±10.4 & \underline{83.4}±7.2    \\
                             & medium-play    & 0.0±0.0   & 65.8±11.6 & 0.3±0.4   & 76.4±2.7  & 0.0±0.0   & \underline{84.6}±5.6  & 55.0±9.6  & 74.7±4.6  & 78.0±7.0  & 75.4±8.5  & \textbf{94.2}±1.9    \\
                             & medium-diverse & 0.2±0.0   & 67.3±3.6  & 0.3±0.4   & 72.8±7.0  & 0.0±0.0   & \underline{79.2}±3.1  & 56.6±10.3 & 79.1±5.6  & 71.0±13.0 & 73.4±8.0  & \textbf{90.6}±3.6    \\
                             & large-play     & 0.0±0.0   & 20.8±7.3  & 0.0±0.0   & 42.0±3.8  & 0.0±0.0   & 58.0±12.4 & 20.8±8.0  & 35.3±8.3  & \underline{84.0}±7.0  & 55.4±4.8  & \textbf{84.2}±2.7    \\
                             & large-diverse  & 0.0±0.0   & 20.5±13.2 & 0.0±0.0   & 46.0±4.5  & 0.0±0.0   & 73.4±8.5  & 25.8±7.5  & 36.3±13.7 & \underline{83.0}±4.0  & 59.0±9.8  & \textbf{85.8}±5.1    \\ \midrule
\multicolumn{2}{l}{AntMaze Total}             & 121.0     & 304.5     & 116.2     & 392.4     & 0.0       & 457.1     & 289.2     & 359.6     & \underline{501.0}     & 426.6     & \textbf{534.2}       \\ \midrule
pen                          & human          & 70.2±9.4  & 13.7±17.0 & -3.9±0.2  & \underline{78.5}±8.2  & 35.4±3.3  & 76.9±5.1  & 64.5±7.5  & 47.5±7.4  & 53.0±6.0  & -3.1±0.0  & \textbf{99.5}±3.4    \\
hammer                       & human          & 1.2±0.9   & 0.1±0.1   & -0.3±0.1  & 1.8±0.8   & 0.4±0.3   & \underline{1.9}±0.7   & 0.7±0.5   & 1.2±0.4   & 1.0±1.0   & 0.3±0.1   & \textbf{3.1}±0.6     \\
door                         & human          & 1.4±1.8   & \underline{5.5}±1.3   & 1.0±0.0   & 3.3±1.8   & 0.1±0.0   & 3.0±1.2   & 0.0±0.0   & 0.0±0.0   & 0.0±0.0   & -0.2±0.1  & \textbf{7.9}±2.0     \\ \midrule
\multicolumn{2}{l}{Adroit Total}              & 72.8      & 19.3      & -3.2      & \underline{83.6}      & 35.9      & 81.8      & 65.2      & 48.7      & 54.0      & -3.0      & \textbf{110.5}       \\ \midrule
\multicolumn{2}{l}{Wall time(min)}            & 33        & 403       & 78        & 108       & 444       & 147       & 417       & 111       & 110       & 108       & 114         \\ 
\bottomrule
\end{tabular}
}
\end{table*}

\section{Experiments}
\subsection{Performance Comparison}
\textbf{Tasks.} 
We evaluate StBR across three domains: continuous-control MuJoCo locomotion, sparse-reward AntMaze, and high-dimensional Adroit human manipulation.

\textbf{Hyperparameter tuning.} 
The coefficient $\lambda$ in Eq.~\eqref{eq:pi_practical} is essential for behavior regularization. Following prior works~\citep{bear,brac,spot}, we allow limited access to the online environment to tune this hyperparameter across a minimal candidate set of $\{0.03, 0.05, 0.1\}$ (only 3 choices), which is a reasonable setup for practical applications. See Appendix~\ref{sec:app_main} for additional discussion and details.

\textbf{Main Results.}
Table~\ref{tab:main_results} shows that StBR outperforms existing baselines across diverse benchmarks, underscoring two key advantages. 
First, unlike action-distribution-based methods (e.g., CQL, SPOT, FQL) that suffer from consequence-shift vulnerabilities, StBR effectively facilitates trajectory stitching in long-horizon, sparse-reward tasks like AntMaze. While specific baselines perform selectively, such as FQL's flow modeling succeeding in AntMaze but turning overly conservative in dense-reward MuJoCo, or PBRL collapsing in sparse rewards due to ungrounded Q-variance guidance, StBR delivers robust performance across both regimes.
Second, StBR circumvents the dynamics modeling errors inherent to explicit consequence-based approaches (e.g., POR, SDC, and an explicit next-state prediction baseline, ENSP). Although explicit next-state modeling can aid stitching in low-dimensional tasks, it scales poorly to high-dimensional state spaces like Adroit due to the curse of dimensionality. For instance, ENSP achieves competitive scores in low-dimensional MuJoCo but exhibits severe variance, trails implicit planning in AntMaze, and fails catastrophically in Adroit (45-dim). By enforcing consequence constraints implicitly, StBR avoids intractable dynamics modeling while retaining consequence conservatism, achieving state-of-the-art performance throughout.

\textbf{Runtime.} 
We evaluated the runtime (in minutes) on the \texttt{hopper-medium}. As shown in Table~\ref{tab:main_results}, StBR incurs a computational overhead comparable to that of most behavior regularization methods, demonstrating the high efficiency of our implementation.

\subsection{Hyperparameter Analysis of StBR}\label{sec:sensitive}
\textbf{Effect of $\lambda$.} 
The Lagrangian multiplier $\lambda$ in Eq.~\eqref{eq:pi_practical} is essential in regularization methods, as it controls the balance between policy improvement and conservatism. However, exhaustive hyperparameter tuning is prohibitively costly in offline settings. Consequently, existing methods~\citep{td3bc,spot,fql} typically restrict tuning to a minimal candidate set. However, when deploying algorithms on new tasks in practice, the ability to quickly find an effective $\lambda$ varies across different methods, which directly reflects their ease of use. 

To evaluate method usability, we sweep $\lambda$ from 0.015 (loose conservatism) to 0.5 (over-conservatism) to demonstrate which method can reliably achieve superior performance under a standardized tuning process. Note that for a fair comparison, we apply a Q-value normalization trick to align the objective scales across all baselines (more implementation details are provided in Appendix~\ref{sec:app_main}). 

Figure~\ref{fig:exp1} (a-b) presents the results on the \texttt{hopper-medium-v2} and \texttt{antmaze-large-diverse-v2} tasks, two representative tasks from the D4RL benchmark widely known for their high sensitivity to hyperparameters due to a large proportion of suboptimal data. When evaluating performance across varying values of $\lambda$, StBR consistently outperforms all baseline methods. This demonstrates that StBR is highly robust to parameter choices, making it significantly easier to achieve superior performance during practical deployment.

\textbf{SBGF-related hyperparameters.}
First, the information compression parameter $\beta$ controls the information bottleneck constraint: a larger $\beta$ forces the latent representation to be closer to a standard Gaussian distribution, resulting in a highly compressed latent space. Consequently, the numerical magnitude of the deviation within this space shrinks. To compensate for this reduced scale and maintain an effective behavior regularization penalty, a larger weight $\lambda$ is required. Figure~\ref{fig:exp1} (c) demonstrates this dynamic coupling: as $\lambda$ increases, the optimal $\beta$ (marked with a star) consistently shifts towards larger values. Notably, StBR achieves competitive performance across these configurations. Furthermore, Theorem~\ref{thm:surrogate_bound} suggests that the optimal $\beta$ is also inherently task-dependent, a property we further analyze in Section~\ref{sec:theo_verify}.

Next, we examine the sensitivity of StBR to the latent representation capacity by varying its dimension $|Z| \in [0.5,2] \times |S|$, as illustrated in Figure~\ref{fig:exp1} (d). StBR demonstrates strong robustness across a broad intermediate range (from 1 to 1.5 $\times|S|$). Performance degradation is primarily observed at the extremes: an excessively low dimension severely bottlenecks information, preventing the model from capturing necessary environmental dynamics, whereas an overly high dimension introduces representation redundancy and optimization challenges. 
Overall, the empirical results indicate that StBR is highly robust to SBGF-related hyperparameters within reasonable configurations.

\begin{figure}[t]
  \centering
  \includegraphics[width=\columnwidth]{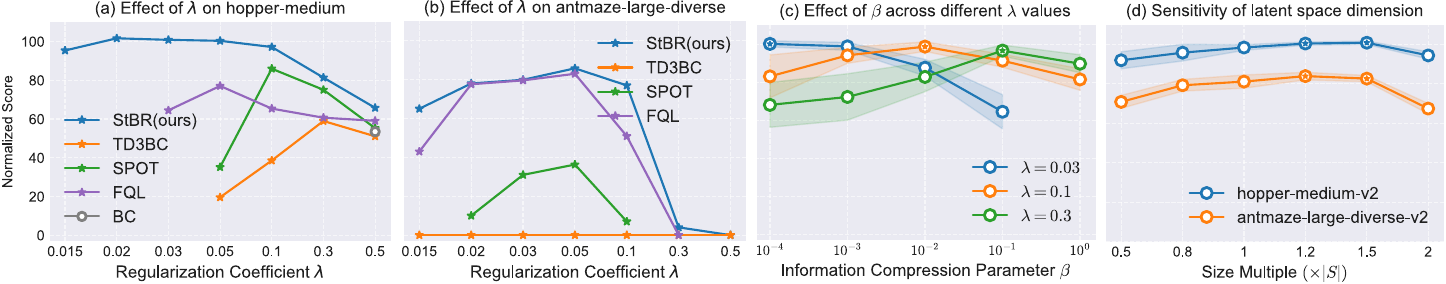} 
  \caption{Hyperparameter sensitivity analysis. (a-b) When evaluating the performance at varying levels of constraint strength ($\lambda$) on two representative tasks, StBR takes shape of the "upper envelope" of all methods, showing that StBR is highly robust to parameter choices, making it significantly easier to achieve superior performance during practical deployment. (c-d) Sensitivity analysis of SBGF-related hyperparameters regarding $\beta$ and latent space dimension.}
  \label{fig:exp1} 
\end{figure}

\begin{figure}[t]
  \centering
  \includegraphics[width=\columnwidth]{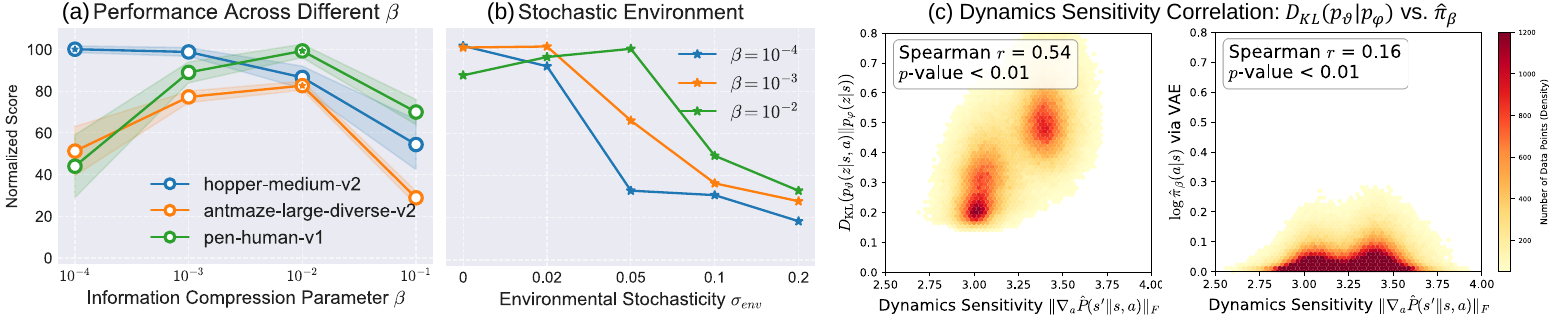} 
  \caption{Theoretical validation experiments. (a-b) Demonstrating that a larger $\beta$ should be chosen in high-dimensional tasks and highly stochastic environments to maintain superior performance. (c) Correlation analysis between dynamic sensitivity and different metrics: the proposed consequence-aware divergence and the action-distribution measure. Results validate that latent divergence (left, Spearman $r=0.54$) exhibits a significantly stronger correlation with dynamics sensitivity than the VAE-based density measure (right, $r=0.16$).
  }
  \label{fig:exp2} 
\end{figure}

\subsection{Theoretical Validations}\label{sec:theo_verify}

\textbf{Validating the Bias-Variance Trade-off ($\beta$).} 
Theorem 1 suggests that $\beta$ should be appropriately larger in high-dimensional tasks and highly stochastic environments. To empirically validate this theoretical insight, we evaluate StBR under two distinct regimes:

(a) \textit{Task dimensions:} We compare the performance of StBR across tasks with different state dimensions by varying $\beta \in \{10^{-4}, 10^{-3}, 10^{-2}, 10^{-1}\}$. As illustrated in Figure~\ref{fig:exp2} (a), for the low-dimensional \texttt{hopper} task, the optimal $\beta$ is $10^{-4}$. In contrast, for the high-dimensional \texttt{antmaze} and \texttt{pen} tasks, the optimal $\beta$ shifts to $10^{-2}$.

(b) \textit{Environment stochasticity:} We evaluate StBR on the \texttt{hopper-medium} task with artificially injected action execution stochasticity (i.e., simulated motor errors). As shown in Figure~\ref{fig:exp2} (b), as the variance of the injected noise ($\sigma_{env}^2$) increases ( higher environmental stochasticity), the optimal value of $\beta$ correspondingly increases to maintain robust performance.

\textbf{Correlation with Dynamics Sensitivity.} 
A core motivation of StBR is to adaptively tighten behavior constraints in regions where transitions are highly sensitive to action deviations. To validate this, we estimate dynamic sensitivity using the Frobenius norm of the Jacobian matrix of a pre-trained dynamics model $\|\nabla_a \hat{P}(s'|s, a)\|_F$. Figure~\ref{fig:exp2}(c) illustrates the relationship between different metrics and dynamics sensitivity for state-action pairs from the \texttt{hopper-medium} dataset. The proposed consequence-aware divergence demonstrates a strong positive correlation, yielding a Spearman coefficient of $0.54$ ($p < 0.01$). In contrast, the action-distribution measure via VAE~\citep{spot}, $\hat{\pi}_\beta(a|s)$, exhibits a significantly weaker relationship, with a correlation of only $0.16$ ($p < 0.01$). These results confirm that latent divergence effectively serves as a consequence-aware surrogate, naturally imposing stricter regularization in dynamics-sensitive regions.


\subsection{Ablation Study}\label{sec:ablation}
\begin{multicols}{2}
\textbf{Bidirectional Action IB.} 
First, we ablate the bidirectional constraint in Eq.~\eqref{eq:sbvib}. Results are presented in Figure~\ref{fig:exp3} (left). As expected, removing this constraint causes our objective to degenerate into a standard VIB. Under this formulation, the state belief completely collapses into a standard Gaussian prior, thereby losing the critical semantic information regarding the high-probability consequences induced by the behavior policy. Consequently, the variant suffers catastrophic performance degradation, even entirely ineffective on high-dimensional environments such as the AntMaze and Adroit tasks.
\begin{figure}[H]
    \centering
    \includegraphics[width=1.0\linewidth]{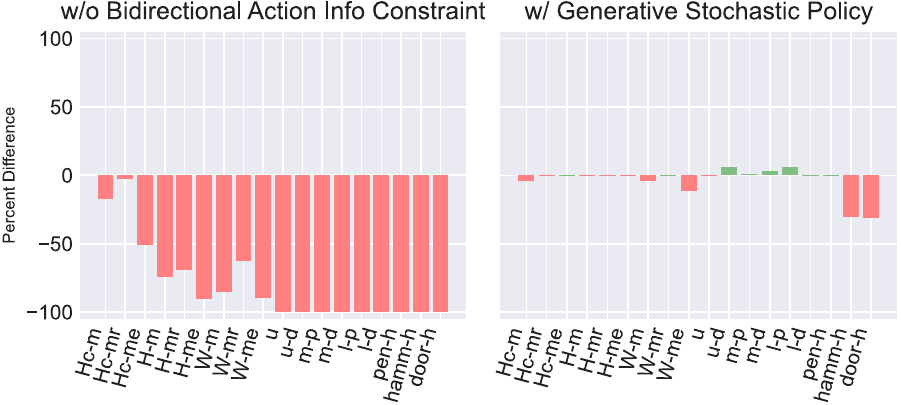} 
    \caption{Percent performance difference of StBR variants compared to the original algorithm.}
    \label{fig:exp3} %
\end{figure}
\end{multicols}
\textbf{Are stochastic policies better than deterministic policies?} 
Standard Gaussian stochastic policies are often disadvantageous in offline policy constraint methods due to the mode-averaging problem~\citep{spot}. 
To investigate the performance of StBR when learning a stochastic policy, we train a one-step generative policy based on the FQL~\citep{fql} framework. 
As depicted in Figure~\ref{fig:exp3} (right), the results demonstrate that the generative stochastic policy primarily exhibits advantages in sparse-reward tasks (AntMaze). Conversely, in tasks where the optimal actions are near-deterministic, the stochastic policy generally underperforms the deterministic one. This difference arises because stochastic policies assign non-zero probability mass to sub-optimal actions. 
Extensive supplementary experiments further validating the capabilities of StBR are detailed in Appendix~\ref{sec:app_experiments}.

\section{Conclusion}
This paper proposes the Structural Behavior Regularization (StBR) framework, reformulating behavior regularization as a consequence-aware constraint process to solve the consequence-shift vulnerability problem in offline RL. The proposed surrogate latent divergence avoids dynamic density estimation via structured representation learning and is theoretically proven to be a reliable approximation. Experiments show StBR achieves superior performance across diverse offline RL benchmarks.

\textbf{Limitations.}
While StBR achieves strong empirical performance, it comes with several limitations that suggest promising directions for future research.
First, StBR imposes conservatism relative to the behavior policy, meaning that severely sub-optimal behavioral data may induce an inherent tension between conservatism and policy improvement; developing adaptive or evolving regularization mechanisms could mitigate this conflict.
Second, the unimodal Gaussian latent assumption embodies a fundamental trade-off between expressiveness and tractability. Due to its light-tailed nature and propensity for mode-averaging, a single Gaussian struggles in regimes with severe multimodality or sharp transition boundaries—such as environments with heavy-tailed stochasticity where rare transitions govern task success (as observed in our stochastic environment), or systems with contact discontinuities and bifurcations where unimodal priors induce artificial smoothing. Extending our latent representation to expressive multimodal priors (e.g., mixtures or diffusion models) is a natural avenue to capture such dynamics.
Finally, following standard practice in current offline RL literature, our setup relies on limited online evaluations for hyperparameter tuning due to the open challenges in offline policy evaluation. Nevertheless, this introduces non-trivial deployment overhead and safety concerns in high-stakes domains where online interactions are strictly prohibited. Developing reliable, strictly offline model selection and evaluation criteria remains an essential prerequisite for fully autonomous deployment.

\section*{Acknowledgements}
This work is partially supported by National Natural Science Foundation of China (62476128).





\newpage
\bibliographystyle{named}
\bibliography{nips26}

\appendix

\newpage

\section{Related Work}\label{sec:appp_related}
\paragraph{Behavior-regularized methods.} 
To mitigate the severe extrapolation errors caused by out-of-distribution (OOD) actions, the dominant paradigm in offline RL is behavior regularization, which penalizes the learned policy for deviating from the offline dataset. Early methods enforce this via explicit architectural constraints (e.g., BCQ~\citep{bcq}, PLAS~\citep{plas}) or by minimizing divergence metrics like MSE~\citep{td3bc}, MMD~\citep{bear}, and KL-divergence~\citep{brac}. To address the limited expressivity of these early models on complex datasets, recent advancements leverage powerful generative models such as VAEs~\citep{spot}, diffusion models~\citep{eql}, and flow matching~\citep{fql} to explicitly match the action distribution of the behavior policy. Despite their success, these methods are fundamentally action-distribution centric and inherently agnostic to the underlying dynamics structure. Due to function approximation and finite data, minor action deviations are inevitable. In dynamics-sensitive regions, such minor action deviations can compound into drastically different state transitions, inadvertently driving the agent toward unsafe OOD regions. In contrast, StBR shifts this paradigm by imposing policy constraints based on the induced state-consequences rather than pure action-distribution matching.

\paragraph{Consequence-aware methods.}
Several offline RL approaches can be viewed as consequence-aware by anticipating future states. For instance, POR~\citep{por} employs hierarchical state-planning with an inverse model, while ODAC~\citep{odac} and LUC~\citep{luc} constrain actions using explicit dynamics models and value uncertainty. Out-of-sample state recovery methods, such as OSR~\citep{osr} and SDC~\citep{sdc}, utilize inverse dynamics or ensemble models to pull the agent back to the data manifold via transition alignment. While effective in simple environments, these methods rely heavily on explicit state predictions or computationally burdensome ensemble models, making them intractable in high-dimensional continuous control tasks. To the best of our knowledge, StBR is the first consequence-aware method that eliminates the need for explicit future state generation, dynamics density estimation, and ensemble models. StBR elegantly translates the consequence penalty into a lightweight latent divergence, making it highly computationally efficient while retaining the full benefits of forward-looking constraints.



\paragraph{Model-based methods.} 
Another line of research mitigating distribution shift operates by explicitly learning the environment dynamics. Model-based offline RL methods, such as MOPO~\citep{mopo} and MOReL~\citep{morel}, construct forward transition models and penalize policies for visiting states with high model uncertainty. However, such uncertainty estimation heavily relies on heuristics and is highly sensitive to hyperparameters due to the varying scales of uncertainty across states. RAMBO~\citep{rambo} further addresses this by replacing explicit uncertainty penalties with an adversarial framework, which directly modifies the transition dynamics to enforce pessimism. MOBILE~\citep{mobile} further addresses the inconsistency between the learned model and the Bellman operator. 
The fundamental limitation of these methods lies in their vulnerability to model errors. Accurately learning the underlying transition dynamics from static datasets is notoriously difficult, and inaccuracies in the forward model inevitably lead to severe compounding errors during multi-step rollouts. In contrast, StBR circumvents this by completely omitting explicit state-reconstruction decoders during the policy update phase, while still maintaining dynamics awareness through latent divergence regularization.

\paragraph{Dynamics-based representation learning.} 
Representation learning in offline RL often incorporates dynamics modeling. A common intuition is that learning a superior latent state representation can implicitly capture the system's evolution. Representative methods like TSRL~\citep{tsrl} and DASP~\citep{dasp} perform dynamic predictions within the latent space to mitigate over-conservatism and improve generalization. However, these approaches typically conduct the core RL optimization (e.g., value evaluation and policy optimization) entirely within the learned latent space. In contrast, StBR constructs the latent space solely to evaluate the consequence-aware penalty, while the actual RL learning remains grounded in the original observation space. This design makes StBR compatible with them and can be integrated with these approaches to pursue even stronger generalization performance.

\newpage

\section{Proofs}\label{sec:proofs}
This section provides the proofs for the propositions and theories in the paper.

\subsection{Proof of Proposition 1}\label{sec:app_p1}

\textbf{Proposition 1} \textit{(Surrogate Consequence Divergence).} \textit{Assume $z$ is a latent variable acting as a sufficient statistic for the state-action pair $(s, a)$ to predict the next state $s'$, forming the causal diagram $(s, a) \to z \to s'$. For any policy $\pi$ and behavior policy $\pi_\beta$, the KL divergence between their consequence state distributions is upper-bounded by the KL divergence between their corresponding latent distributions:}
\begin{align}
    D_{\mathrm{KL}} \big( P_{\pi}(s'|s) \| P_{\pi_\beta}(s'|s) \big) \le D_{\mathrm{KL}} \big( p_{\pi}(z|s) \| p_{\pi_\beta}(z|s) \big).
\end{align}

\begin{proof}
Given the causal diagram $(s, a) \to z \to s'$, the conditional independence $(s, a) \perp \! \! \! \perp s' | z$ naturally follows. Consequently, the transition dynamics can be decomposed via the latent variable $z$ as:
\begin{align}
    P(s'|s, a) = \int p(s'|z, s, a) p(z|s, a) \, dz = \int p(s'|z) p(z|s, a) \, dz.
\end{align}
Accordingly, we define the latent distribution under policy $\pi$ as $p_\pi(z|s) = \int p(z|s, a) \pi(a|s)da$. The transition dynamic then becomes $P_\pi(s'|s) = \int p(s'|z) p_\pi(z|s) \, dz$, where the decoder $p(s'|z)$ is an intrinsic property of the environment and remains independent of the policy.

To establish the bound, we consider the joint distribution $p_\pi(z, s' | s)$ and decompose its KL divergence. By definition, the state transition distribution marginalized over actions is $P_\pi(s'|s) = \sum_{a} P(s'|s, a) \pi(a|s)$. The complete error decomposition can be structured as follows:
\begin{align}
    &D_{\mathrm{KL}} \big( p_\pi(z|s) \| p_{\pi_\beta}(z|s) \big) \\
    \overset{(a)}{=} &D_{\mathrm{KL}} \big( p_\pi(z, s' | s) \| p_{\pi_\beta}(z, s' | s) \big) \\
    \overset{(b)}{=} &\mathbb{E}_{(z, s') \sim p_\pi} \left[ \log \frac{P_\pi(s'|s) p_\pi(z|s, s')}{P_{\pi_\beta}(s'|s) p_{\pi_\beta}(z|s, s')} \right] \label{eq:termb}\\
    \overset{(c)}{=} &D_{\mathrm{KL}} \big( P_\pi(s'|s) \| P_{\pi_\beta}(s'|s) \big) + \mathbb{E}_{s' \sim P_\pi(\cdot|s)} \left[ D_{\mathrm{KL}} \big( p_\pi(z|s, s') \| p_{\pi_\beta}(z|s, s') \big) \right] \label{eq:exact_gap}\\
    \overset{(d)}{\ge} &D_{\mathrm{KL}} \big( P_\pi(s'|s) \| P_{\pi_\beta}(s'|s) \big).
\end{align}

For (a): We construct the joint distribution conditioned on state $s$ as $p_\pi(z, s' | s) = p_\pi(z|s)p(s'|z)$. Expanding the joint KL divergence along the latent space $z$ yields:
\begin{align}
   &D_{\mathrm{KL}} \big( p_\pi(z, s' | s) \| p_{\pi_\beta}(z, s' | s) \big) \\
    =& \int \int p_\pi(z|s)p(s'|z) \left[ \log \frac{p_\pi(z|s)}{p_{\pi_\beta}(z|s)} + \log \frac{p(s'|z)}{p(s'|z)} \right] \, dz \, ds' \\
    =& \int p_\pi(z|s) \log \frac{p_\pi(z|s)}{p_{\pi_\beta}(z|s)} \underbrace{\left( \int p(s'|z) \, ds' \right)}_{=1} dz + \int p_\pi(z|s) \left( \int p(s'|z) \underbrace{\log 1}_{=0} \, ds' \right) dz \\
    =& D_{\mathrm{KL}} \big( p_\pi(z|s) \| p_{\pi_\beta}(z|s) \big).
\end{align}

For (b): We apply Bayes' rule to factorize the joint distribution over $(z, s')$ as: $p_\pi(z, s' | s) = P_\pi(s'|s) p_\pi(z | s, s')$. Here, $p_\pi(z | s, s') = \frac{p(s'|z)p_\pi(z|s)}{P_\pi(s'|s)}$ represents the posterior of the latent variable $z$ conditioned on the next state $s'$. Substituting this factorization into the joint KL divergence yields Eq.~\eqref{eq:termb}.

For (c): We decompose the integral into two parts. Marginalizing the first term over $z$ recovers the marginal KL divergence $D_{\mathrm{KL}}(P_\pi \| P_{\pi_\beta})$. The second term represents the expected KL divergence between the posteriors: $\mathbb{E}_{s' \sim P_\pi(\cdot|s)} \big[ \int p_\pi(z|s, s') \log \frac{p_\pi(z|s, s')}{p_{\pi_\beta}(z|s, s')}  dz \big]$.

For (d): By definition, the KL divergence is strictly non-negative. Consequently, the second term in Eq.~\eqref{eq:exact_gap} is also non-negative, which yields the final inequality bound.

This concludes the proof.

\textit{Remark:} 
Eq.~\eqref{eq:exact_gap} further reveals that the exact gap between the two KL divergences is the expected KL divergence of the posterior latent distributions.

\end{proof}

\subsection{Proof of Proposition 2}\label{sec:app_p2}

\textbf{Proposition 2} \textit{(State Belief Gaussian Fitting based on VIB).} \textit{Let $Z$ be a stochastic latent representation of a state-action pair $(S, A)$ generated by an encoder $p_\vartheta(z|s, a)$. Let $q_\psi(s'|z)$ be a variational decoder, $p_\varphi(z|s)$ be a state belief, and $p(z)$ be a fixed marginal prior (e.g., $\mathcal{N}(0, I)$). The VIB objective $\max  I(Z; S') - \beta I(Z; S, A)$ can be optimized via the following approximation objective:}
\begin{align}
    \mathcal{L}_{\mathrm{SBGF}}(\psi, \vartheta, \varphi) = \mathbb{E}_{(s,a,s') \sim \mathcal{D}} \Big[ &\underbrace{\mathbb{E}_{z \sim p_\vartheta(\cdot|s,a)} [-\log q_\psi(s' | z)]}_{\text{Reconstruction}} \nonumber \\
    &+ \beta \Big( \underbrace{D_{\mathrm{KL}}(p_\vartheta(z|s,a) \| p_\varphi(z|s))}_{\text{Bidirectional Action IB}} + \underbrace{D_{\mathrm{KL}}(p_\varphi(z|s) \| p(z))}_{\text{State IB}} \Big) \Big]. 
\end{align}

\begin{proof}
To learn a predictive yet sufficient representation, we adopt the VIB framework based on the causal diagram $(s, a) \to z \to s'$. The core objective is to maximize the mutual information between the latent representation $Z$ and the target next state $S'$, while constraining the information $Z$ retains about the state-action pair $(S, A)$:
\begin{align}
    \max I(Z; S') - \beta I(Z; S, A)
\end{align}
We will examine each of these terms in turn.

\textbf{Predictive Lower Bound for $I(Z; S')$.}
We first derive a tractable lower bound for the predictive term $I(Z; S')$. Expanding the mutual information, we have:
\begin{align}
    I(Z; S') &= \mathbb{E}_{s'\sim p(s', z)} \Big[ \log \frac{p(s'|z)}{p(s')} \Big] \\
    &= \mathbb{E}_{p(s', z)} [ \log p(s'|z) ] + H(S')
\end{align}
Since the true posterior $p(s'|z)$ is intractable, we introduce a parameterized variational decoder $q_\psi(s'|z)$. Using the non-negativity of the KL divergence, $D_{\mathrm{KL}}(p(s'|z) \| q_\psi(s'|z)) \ge 0$, we obtain $\mathbb{E}_{p(s'|z)}[\log p(s'|z)] \ge \mathbb{E}_{p(s'|z)}[\log q_\psi(s'|z)]$. Substituting this yields:
\begin{align}
    I(Z; S') \ge \mathbb{E}_{p(s', z)} [\log q_\psi(s'|z)] + H(S')
\end{align}
The entropy term $H(S')$ depends only on the environment dynamics and is constant with respect to our optimization parameters. Thus, it can be ignored during training. We can rewrite the joint distribution $p(s', z)$ by marginalizing over the dataset $\mathcal{D}$. Leveraging the causal diagram that $S'$ and $Z$ are conditionally independent given $(S, A)$, we have $p(s, a, s', z) = p(s, a) p(s'|s, a) p_\vartheta(z|s, a)$. This yields the standard lower bound of the mutual information between $Z$ and $S'$:
\begin{equation}
    I(Z; S') \ge \mathbb{E}_{(s,a,s') \sim \mathcal{D}} \left[ \mathbb{E}_{z \sim p_\vartheta(\cdot|s,a)} [\log q_\psi(s'|z)] \right] + \text{const.}
\end{equation}

\textbf{Complexity Upper Bound for $I(Z; S, A)$.}
Next, we establish a tractable upper bound for the complexity penalty $I(Z; S, A)$. To introduce a state belief, we can decompose this term using the chain rule of mutual information:
\begin{equation}
    I(Z; S, A) = I(Z; S) + I(Z; A | S)
\end{equation}
Let $p_{\pi_{\beta}}(z|s) = \int p_\vartheta(z|s,a)\pi_{\beta}(a|s)da$ be the true state belief (state-conditional marginal representation) under the behavior policy $\pi_\beta$, and $p(z) = \int p_{\pi_{\beta}}(z|s)p(s)ds$ be the fully marginalized prior. We can expand the mutual information as the expected KL divergence:
\begin{align}
    I(Z; S, A) &= \mathbb{E}_{(s,a)\sim\mathcal{D}} \left[ D_{\mathrm{KL}}(p_\vartheta(z|s,a) \| p(z)) \right] \nonumber \\
    &= \mathbb{E}_{(s,a)\sim\mathcal{D}} \left[ \int p_\vartheta(z|s,a) \log \left( \frac{p_\vartheta(z|s,a)}{p_{\pi_{\beta}}(z|s)} \cdot \frac{p_{\pi_{\beta}}(z|s)}{p(z)} \right) dz \right] \nonumber \\
    &= \underbrace{\mathbb{E}_{(s,a)\sim\mathcal{D}} \left[ D_{\mathrm{KL}}(p_\vartheta(z|s,a) \| p_{\pi_{\beta}}(z|s)) \right]}_{\text{(1) } I(Z; A|S)} + \underbrace{\mathbb{E}_{s\sim\mathcal{D}} \left[ D_{\mathrm{KL}}(p_{\pi_{\beta}}(z|s) \| p(z)) \right]}_{\text{(2) } I(Z; S)}
\end{align}

Because both $p_{\pi_{\beta}}(z|s)$ and $p(z)$ are computationally intractable, we introduce variational approximations: a learned state belief network $p_\varphi(z|s)$ to approximate $p_{\pi_{\beta}}(z|s)$, and a fixed standard prior $p(z)$ (typically $\mathcal{N}(0, I)$).

To bound term (1) $I(Z; A|S)$, we introduce the state belief network $p_\varphi(z|s)$ and rewrite the term using the non-negativity property of the KL divergence $D_{\mathrm{KL}}(p_{\pi_{\beta}} \| p_\varphi) \ge 0$. This gives us a rigorous upper bound:
\begin{align}
    I(Z; A|S) &= \mathbb{E}_{(s,a)\sim\mathcal{D}} \left[ \int p_\vartheta(z|s,a) \log \frac{p_\vartheta(z|s,a)}{p_\varphi(z|s)} dz - \int p_\vartheta(z|s,a) \log \frac{p_{\pi_{\beta}}(z|s)}{p_\varphi(z|s)} dz \right] \\
    &= \mathbb{E}_{(s,a)\sim\mathcal{D}} \left[ D_{\mathrm{KL}}(p_\vartheta(z|s,a) \| p_\varphi(z|s)) \right] - \mathbb{E}_{s\sim\mathcal{D}} \left[ D_{\mathrm{KL}}(p_{\pi_{\beta}}(z|s) \| p_\varphi(z|s)) \right] \\
    &\le \mathbb{E}_{(s,a)\sim\mathcal{D}} \left[ D_{\mathrm{KL}}(p_\vartheta(z|s,a) \| p_\varphi(z|s)) \right] \label{eq:state_belief}
\end{align}

To address term (2) $I(Z; S)$, replacing the true marginal $p(z)$ with a prior $\mathcal{N}(0,I)$ yields a valid upper bound $\mathbb{E}_{s\sim\mathcal{D}} \left[ D_{\mathrm{KL}}(p_{\pi_{\beta}}(z|s) \| p(z)) \right]$, since the true marginal minimizes this KL divergence:
\begin{align}
I(Z; S) &\le \mathbb{E}_{s\sim\mathcal{D}} \left[ D_{\mathrm{KL}}(p_{\pi_{\beta}}(z|s) \| p(z)) \right].
\end{align}

However, the $p_{\pi_{\beta}}(z|s)$ remains intractable. Fortunately, as indicated by Eq. \eqref{eq:state_belief}, optimizing the upper bound for $I(Z; A|S)$ implicitly drives $p_\varphi(z|s)$ to be a tightly optimized variational approximation of $p_{\pi_{\beta}}(z|s)$: $D_{\mathrm{KL}}(p_{\pi_{\beta}}(z|s) \| p_\varphi(z|s))\le \mathbb{E}_{a \sim \pi_\beta(a|s)} \left[ D_{\mathrm{KL}}( p_\vartheta(z|s, a) \| p_{\varphi}(z|s) ) \right] 
$ (due to the convexity of the KL divergence). 
Substituting the intractable marginal $p_{\pi_\beta}(z|s)$ with the learned surrogate $p_\varphi(z|s)$ breaks the strict analytical inequality, meaning the resulting formula acts as a heuristic approximation rather than a rigorous bound. However, it remains a principled and widely adopted variational technique in practice to achieve tractability. Consequently, we formulate the following surrogate objective for term (2): $\mathcal{L}_{I(Z; S)} = \mathbb{E}_{s\sim\mathcal{D}} \left[ D_{\mathrm{KL}}(p_\varphi(z|s) \| p(z)) \right].$


\textbf{Final Objective.}
Combining the rigorous lower bound for $I(Z; S')$, the rigorous upper bound for $I(Z; A|S)$, and the tractable variational approximation for $I(Z; S)$, the original IB objective is approximated by our final empirical objective $\mathcal{L}_{\mathrm{SBGF}}$:
\begin{align}
    \mathcal{L}_{\mathrm{SBGF}}(\psi, \vartheta, \varphi) = \mathbb{E}_{(s,a,s') \sim \mathcal{D}} \Big[ &\mathbb{E}_{z \sim p_\vartheta(\cdot|s,a)} [-\log q_\psi(s' | z)] \nonumber \\
    &+ \beta \big( D_{\mathrm{KL}}(p_\vartheta(z|s,a) \| p_\varphi(z|s)) + D_{\mathrm{KL}}(p_\varphi(z|s) \| p(z)) \big) \Big],
\end{align}
where the latent variable $z$ can be sampled using the reparameterization trick \citep{kingma2013auto}. We use an encoder of the form $p_\vartheta(z|s,a) = \mathcal{N}(\mu_\vartheta, \sigma_\vartheta|s,a)$ and state belief $p_\varphi(z|s) = \mathcal{N}(\mu_\varphi, \sigma_\varphi|s)$; the gradients can be easily backpropagated through the network parameters $\psi, \vartheta,$ and $\varphi$. Specifically, the two KL divergences have the following closed-form solutions:
\begin{align}
    D_{\mathrm{KL}}(p_\vartheta(z|s, a) \| p_\varphi(z|s)) &= -\frac{1}{2}(1 + \log\frac{\sigma_{\vartheta}^2}{\sigma_{\varphi}^2}-\frac{\sigma_{\vartheta}^2+(\mu_{\vartheta} - \mu_{\varphi})^2}{\sigma_{\varphi}^2})\\
    D_{\mathrm{KL}}(p_\varphi(z|s) \| p(z)) &= -\frac{1}{2}(1+\log\sigma_{\varphi}^2 - \mu_{\varphi}^2 - \sigma_{\varphi}^2)
\end{align}
This concludes the proof.
\end{proof}

\subsection{Proof of Theorem 1}\label{sec:app_t1}

\textbf{Theorem 1} \textit{(Surrogate Discrepancy Bound).} \textit{Let $\hat p_{\pi}(z|s)=\mathbb{E}_{a\sim\pi(\cdot|s)} p_\vartheta(z|s,a)$ for any given policy $\pi$, and let $\hat p_{\pi_\beta}(z|s)= p_{\varphi}(z|s)$ be the models estimated by SBGF in Eq.~\eqref{eq:sbvib} on an offline dataset $\mathcal{D}$ of size $N$. With probability at least $1 - \delta$, the following bound holds for the discrepancy between the true consequence divergence and the empirical surrogate divergence:}
\begin{small}
\begin{align}
    &\left| \mathbb{E}_{s \sim d^{\pi_\beta}} \big[ D_{\mathrm{KL}}(P_{\pi}(s'|s) \| P_{\pi_\beta}(s'|s)) \big] - \mathbb{E}_{s \sim \mathcal{D}} \big[ D_{\mathrm{KL}}(\hat p_{\pi}(z|s)  \| \hat p_{\pi_\beta}(z|s)) \big] \right| \\
    \le &\underbrace{\mathbb{E}_{s,s'} \left[ D_{\mathrm{KL}}(p_{\pi}(z|s,s') \| p_{\pi_\beta}(z|s,s')) \right]}_{\substack{\text{\textbf{(I) Dynamics Info Compression (Bias)}} \\ \text{\textbf{(Increases as }} \beta ~\uparrow \text{\textbf{)}}}}  
    + \underbrace{\mathcal{O}\left( \sqrt{\frac{\mathrm{Var}_s(D_{\mathrm{KL}}(p_{\pi}(z|s)  \| p_{\pi_\beta}(z|s)) \log(1/\delta)}{N}} \right)}_{\substack{\text{\textbf{(II) Estimation Error (Variance)}} \\ \text{\textbf{(Decreases as }} \beta ~\uparrow \text{\textbf{)}}}} 
    + \underbrace{C_\pi \epsilon_{\mathrm{opt},\beta}}_{\text{\textbf{(III) Policy Shift}}} 
\end{align}
\end{small}
where the policy ratio in the latent space is bounded by $C_\pi = \max_{s \in \mathcal{D}, z} \frac{p_\pi(z|s)}{p_{\pi_\beta}(z|s)}$, and $\epsilon_{\mathrm{opt},\beta}$ is the optimization error related to the bidirectional action IB loss in Eq.~\eqref{eq:sbvib}. 

\begin{proof}

The goal of our analysis is to bound the discrepancy between our learned empirical surrogate latent divergence and the true consequence divergence.

We begin by simplifying the notation for the divergence metrics involved in our analysis:
\begin{itemize}
    \item True Consequence Divergence $D_{s'} := \mathbb{E}_{s \sim d^{\pi_\beta}} \big[ D_{\mathrm{KL}}(P_\pi(s'|s) \| P_{\pi_\beta}(s'|s)) \big]$, representing the expected divergence between consequence distributions across all states.
    \item Theoretical Latent Divergence $D_{z} := \mathbb{E}_{s \sim d^{\pi_\beta}} \big[ D_{\mathrm{KL}}(p_\pi(z|s) \| p_{\pi_\beta}(z|s)) \big]$. Here, the two distributions represent the theoretical consequence beliefs based on the encoder for the learned policy and the behavior policy, respectively: $p_{\pi_\beta}(z|s) := \mathbb{E}_{a \sim \pi_\beta(\cdot|s)} [p_\vartheta(z|s,a)]$ and $p_{\pi}(z|s) := \mathbb{E}_{a \sim \pi(\cdot|s)} [p_\vartheta(z|s,a)]$. 
    \item Empirical Latent Divergence $\hat{D}_{z} := \mathbb{E}_{s \sim \hat{d}^{\pi_\beta}} \big[ D_{\mathrm{KL}}(\hat{p}_\pi(z|s) \| \hat{p}_{\pi_\beta}(z|s)) \big]$, where $\hat{d}^{\pi_\beta}(s)$ is the empirical state distribution defined by the offline dataset $\mathcal{D}$ of size $N$. The two empirical distributions are based on our encoder and the learned state belief: $\hat{p}_\pi(z|s) := \int p_\vartheta(z|s,a)\pi(a|s)\,da$ and $\hat{p}_{\pi_\beta}(z|s) := p_\varphi(z|s)$.
\end{itemize}

Our first goal is to bound the discrepancy between the intractable objective $D_{s'}$ and the objective optimized in practice $\hat{D}_{z}$. By introducing the theoretical latent divergence $D_{z}$ and applying the triangle inequality, we decompose the total error into a theoretical gap and an empirical error:
\begin{align} 
    |D_{s'} - \hat{D}_{z}| 
    &= |D_{s'} - D_{z} + D_{z} - \hat{D}_{z}| \\ 
    &\le \underbrace{|D_{s'} - D_{z}|}_{\text{Theoretical Gap}} + \underbrace{|D_{z} - \hat{D}_{z}|}_{\text{Empirical Error}}.
\end{align}

We proceed to analyze these terms individually.

\textbf{For the Theoretical Gap.} 
Taking the expectation over $s \sim d^{\pi_\beta}(s)$ and applying the exact gap derived in Eq.~\eqref{eq:exact_gap}, the theoretical gap is exactly the expected posterior divergence:
\begin{align}
    |D_{s'} - D_{z}| = \mathbb{E}_{s \sim d^{\pi_\beta}(s), s' \sim P_\pi(\cdot|s)} \left[ D_{\mathrm{KL}}(p_\pi(z|s, s') \| p_{\pi_\beta}(z|s, s')) \right],
\end{align}
where $p_\pi(z | s, s') = \frac{p(s'|z)p_\pi(z|s)}{P_\pi(s'|s)}$ represents the posterior of the latent variable $z$ conditioned on the next state $s'$. 

\textbf{For the Empirical Error.} 
To further simplify notation, let $D(s) := D_{\mathrm{KL}}(p_\pi(z|s) \| p_{\pi_\beta}(z|s))$ and $\hat{D}(s) := D_{\mathrm{KL}}(\hat{p}_\pi(z|s) \| \hat{p}_{\pi_\beta}(z|s))$. By applying the triangle inequality again, we separate the empirical error into a sampling error due to finite data, and an estimation error due to model approximation:
\begin{align}
    |D_{z} - \hat{D}_{z}| 
    &= \left| \int ds \, d^{\pi_\beta}(s) D(s) - \int ds \, \hat{d}^{\pi_\beta}(s) \hat{D}(s) \right| \\
    &\le \underbrace{\left| \int ds \left(d^{\pi_\beta}(s) - \hat{d}^{\pi_\beta}(s) \right) D(s) \right|}_{\text{Sampling Error}} + \underbrace{\int ds \, \hat{d}^{\pi_\beta}(s) \left| D(s) - \hat{D}(s) \right|}_{\text{Estimation Error}}.
\end{align}

\textbf{Bounding Sampling Error.} 
In the continuous-control setting, we consider $D_z = \mathbb{E}_{s \sim d^{\pi_\beta}(s)} [D(s)]$ as a bounded expectation under mild encoder regularity. Assuming there exists a constant $D_{max}$ such that $|D(s) - D_z| \le D_{max}$ similar to~\cite{ib-bound}, we can apply the Bernstein inequality (following Lemma A.3 in~\cite{bcp}). With probability at least $1 - \delta$, the sampling error is bounded by:
\begin{align}
\text{Sampling Error} \le \sqrt{\frac{2 \mathrm{Var}_{s \sim d^{\pi_\beta}}(D(s)) \log(2/\delta)}{N}} + \frac{2D_{\max} \log(2/\delta)}{3N}
\end{align}

For sufficiently large $N$, the second term decays faster, making the first term dominant. Using big-$\mathcal{O}$ notation, the bound simplifies to: 
\begin{align}
\text{Sampling Error} \le \mathcal{O}\left( \sqrt{\frac{\mathrm{Var}_s(D(s)) \log(1/\delta)}{N}} \right)
\end{align}


\textbf{Bounding Estimation Error.}
The estimation error relies on the capacity of the learned models. By definition above, we decompose the discrepancy as follows:
\begin{align}
    \text{Estimation Error} &= \mathbb{E}_{s \sim \mathcal{D}}\left| D(s) - \hat{D}(s) \right|\\
    &= \mathbb{E}_{s \sim \mathcal{D}} \left| D_{\mathrm{KL}}(p_\pi(z|s) \| p_{\pi_\beta}(z|s)) - D_{\mathrm{KL}}(\hat{p}_\pi(z|s) \| \hat{p}_{\pi_\beta}(z|s)) \right| \nonumber \\
    &\overset{(a)}{=} \mathbb{E}_{s \sim \mathcal{D}} \left| \int p_\pi(z|s) \log \frac{\hat{p}_{\pi_\beta}(z|s)}{p_{\pi_\beta}(z|s)} dz \right| \\
    &\le \mathbb{E}_{s \sim \mathcal{D}} \left[ \int p_\pi(z|s) \left| \log \frac{p_{\pi_\beta}(z|s)}{\hat{p}_{\pi_\beta}(z|s)} \right| dz \right] \quad \text{(Triangle Inequality)}\\
    &\overset{(b)}{\le} C_\pi \mathbb{E}_{s \sim \mathcal{D}} \left[ \int p_{\pi_\beta}(z|s) \left| \log \frac{p_{\pi_\beta}(z|s)}{\hat{p}_{\pi_\beta}(z|s)} \right| dz \right] \\
    &\overset{(c)}{=} C_\pi \mathbb{E}_{s \sim \mathcal{D}} \left[ \int p_{\pi_\beta}(z|s) \log \frac{p_{\pi_\beta}(z|s)}{\hat{p}_{\pi_\beta}(z|s)} dz - 2 \int_{p_{\pi_\beta} < \hat{p}_{\pi_\beta}} p_{\pi_\beta}(z|s) \log \frac{p_{\pi_\beta}(z|s)}{\hat{p}_{\pi_\beta}(z|s)} dz \right] \\
    &\overset{(d)}{\le} C_\pi \mathbb{E}_{s \sim \mathcal{D}} \left[ D_{\mathrm{KL}}(p_{\pi_\beta}(z|s) \| \hat{p}_{\pi_\beta}(z|s)) + \int \left| p_{\pi_\beta}(z|s) - \hat{p}_{\pi_\beta}(z|s) \right| dz \right] \\
    &\le C_\pi \mathbb{E}_{s \sim \mathcal{D}} \left[ D_{\mathrm{KL}}(p_{\pi_\beta} \| \hat{p}_{\pi_\beta}) + \sqrt{2 D_{\mathrm{KL}}(p_{\pi_\beta} \| \hat{p}_{\pi_\beta})} \right]. \quad \text{(Pinsker's inequality)}
\end{align}

For (a): Based on an encoder defined by a given $\beta$, the policy-conditioned latent distribution is strictly formed by integrating over actions: $\hat{p}_\pi(z|s) = \int p_\vartheta(z|s,a)\pi(a|s)da = p_\pi(z|s)$. Expanding the KL divergences leaves the $\log\frac{\hat{p}_{\pi_\beta}}{ p_{\pi_\beta}}$.

For (b): We apply the triangle inequality to move the absolute value inside the integral. We apply importance sampling $p_\pi(z|s) = \frac{p_\pi(z|s)}{p_{\pi_\beta}(z|s)} p_{\pi_\beta}(z|s)$, and bound the density ratio in latent space with a constant $C_\pi = \max_{s \in \mathcal{D}, z} \frac{p_\pi(z|s)}{p_{\pi_\beta}(z|s)}$, representing the maximum policy shift in latent space.

For (c): We rewrite the absolute log term using the identity $|x| = x - 2x \cdot \mathbb{I}_{\{x < 0\}}$. 

For (d): The first integral is exactly the definition of KL divergence. For the second integral, when $p_{\pi_\beta} < \hat{p}_{\pi_\beta}$, the negative log inequality $-\log(p/q) \le (q/p) - 1$ implies that $-2 \int_{p<q} p \log(p/q) \le 2 \int_{p<q} (q - p)$. Using a standard property of probability distributions $2 \int_{p<q} (q - p) = \int |p - q| dz$ yields the absolute difference integral term.

Recalling that $\hat{p}_{\pi_\beta}(z\vert{}s) := p_\varphi(z\vert{}s)$, we introduce the optimal projection $ p_{\varphi}^\star(z|s)$, defined as $\varphi^\star= \arg\min_\varphi \mathbb{E}_{s}[D_{KL}(p_{\pi_\beta} | p_\varphi)]$, we can decompose the error into two distinct components:
\begin{align}
&\mathbb{E}_{s \sim \mathcal{D}} [D_{KL}(p_{\pi_\beta}(z|s) \| p_\varphi(z|s))] \\
= &\underbrace{\mathbb{E}_{s \sim \mathcal{D}} [D_{KL}(p_{\pi_\beta}(z|s)\|p_{\varphi^\star}(z|s))]}_{\text{realizability error }(\epsilon_\beta)} + \underbrace{\mathbb{E}_{s \sim \mathcal{D}} [\mathbb{E}_{z \sim p_{\pi_\beta}(\cdot|s)} [\log p_{\varphi^\star}(z|s) - \log p_\varphi(z|s)]]}_{\text{optimization error }(\epsilon_{\mathrm{opt}})}
\end{align}
The first term explicitly captures the realizability gap ($\epsilon_{\beta}$). This represents the fundamental approximation limit of fitting a single Gaussian to a potentially multimodal behavior-induced latent distribution. This term depends on $\beta$ and vanishes only when $\beta$ is sufficiently large, causing $p_\vartheta$ to collapse to a single Gaussian distribution across all actions. The second term now cleanly isolates the true optimization error ($\epsilon_{\mathrm{opt}}$), which stems strictly from the model training process.

We use Jensen's Inequality on concave function $\sqrt{\cdot}$ and simplify notation of the estimation error bound as:
\begin{align}
\text{Estimation Error} \le C_\pi \left((\epsilon_\beta+\epsilon_{\mathrm{opt}}) + \sqrt{2( \epsilon_\beta+\epsilon_{\mathrm{opt}})}\right):=C_\pi \epsilon_{\mathrm{opt},\beta}
\end{align}

Combining all components yields the final bound:

\begin{align}
    &\left|\mathbb{E}_{s \sim d^{\pi_\beta}} \big[ D_{\mathrm{KL}}(P_\pi(s'|s) \| P_{\pi_\beta}(s'|s)) \big] - \mathbb{E}_{s \sim \hat{d}^{\pi_\beta}(s)} \big[D_{\mathrm{KL}}(\mathbb{E}_{a\sim\pi(\cdot|s)} p_\vartheta(z|s, a) \| p_\varphi(z|s))\big]\right| \\
    \le &\mathbb{E}_{s,s'} \left[ D_{\mathrm{KL}}(p_\pi(z|s,s') \| p_{\pi_\beta}(z|s,s')) \right]
    + \mathcal{O}\left( \sqrt{\frac{\mathrm{Var}_s(D_{\mathrm{KL}}(p_{\pi}(z|s)  \| p_{\pi_\beta}(z|s)) \log(1/\delta)}{N}} \right) \nonumber \\ 
    &+ C_\pi \epsilon_{\mathrm{opt},\beta} 
\end{align}

\end{proof}


\textbf{Remark} \textit{(Validating the Bias-Variance Trade-off ($\beta$)).} 
Theorem~\ref{thm:surrogate_bound} reveals that our surrogate approximation is systematically governed by a fundamental Bias-Variance trade-off dictated by the information compression parameter $\beta$, guiding the representation learning in the causal diagram $s, a \to z \to s'$.

\begin{itemize}
    \item \textbf{Term (I) acts as the Bias.} It measures the hidden divergence of the \textit{policy latent consequences} conditioned on the \textit{same physical outcome} $s'$. When $\beta$ is sufficiently small, the model primarily maximizes the predictive information $I(Z; S')$, making the transition mapping $z \to s'$ near-deterministic. Consequently, given the start state $s$ and the end state $s'$, the intermediate latent $z$ is almost uniquely determined. In this regime, the posterior $p_\pi(z|s,s')$ becomes largely independent of the specific policy, keeping Term (I) close to zero. However, as $\beta$ moderately increases, the information bottleneck introduces stochasticity and compresses out action-specific details. Multiple different latent representations $z$ (arising from different actions) might now map to the same physical outcome $s'$. Consequently, given $(s, s')$, the inferred $z$ relies more heavily on the underlying policy's action distribution, which inflates the divergence between $p_\pi(z|s,s')$ and $p_{\pi_\beta}(z|s,s')$. \textit{(While an excessively large $\beta$ would force $z$ to collapse entirely to the prior, trivially erasing policy differences and bringing Term (I) back to zero, such a heavily regularized model loses all dynamics prediction capability. Thus, the meaningful bias-variance trade-off occurs in the regime of relatively small $\beta$.)}
    \item \textbf{Term (II) acts as the Variance.} It captures the fluctuation of the latent policy discrepancy across different states. A large $\beta$ acts as a strong regularizer, heavily penalizing the action information $a$ in $z$. This smooths out the representational differences between $\pi$ and $\pi_\beta$, effectively suppressing the cross-state variance. A small $\beta$ preserves action details, increasing this variance.
    \item \textbf{Term (III) highlights the OOD Penalty.} $C_\pi$ represents the coverage coefficient of the evaluated policy $\pi$ relative to the behavior policy $\pi_\beta$ in the latent space. If an OOD action maps far away from the support of $p_{\pi_\beta}(z|s)$, $C_\pi$ can become unbounded. In such cases, Theorem 1's bound fails, and the surrogate metric becomes ineffective at guaranteeing the consequence distance. This establishes the theoretical necessity of our proposed regularizer. By explicitly constraining the learned policy $\pi$ to stay within the latent support of $\pi_\beta$, the regularizer ensures that $C_\pi$ remains bounded. As long as OOD behavior in the latent space is prevented, the surrogate's approximation of the consequence divergence is theoretically guaranteed.
\end{itemize}

\textit{Guideline for tuning $\beta$:} This theoretical trade-off provides a principled guideline for environmental adaptation. In highly stochastic environments, the inherent variance of Term (II) dominates, necessitating a larger $\beta$ to curb Term (II). In near-deterministic environments, a smaller $\beta$ is optimal to strictly preserve the transition dynamics and minimize the bias of Term (I).

\subsection{Proof Sketch of Theorem 1: Implementation for Policy Optimization} \label{sec:app_t1s}
In practical policy optimization, directly minimizing the empirical latent divergence $D_{\mathrm{KL}}(\hat{p}_\pi(z|s) \| \hat{p}_{\pi_\beta}(z|s)) = D_{\mathrm{KL}}(\mathbb{E}_{a\sim\pi(\cdot|s)} [p_\vartheta(z|s,a)] \| {p}_{\varphi}(z|s))$ is computationally intractable. The optimization involves nested Monte Carlo sampling, which is expensive and prone to biased estimation. To address this, we instead optimize a tractable point-wise upper bound. By expanding the divergence, we have:
\begin{align}
&~~~~~\mathbb{E}_{a\sim\pi(\cdot|s)} \big[D_{\mathrm{KL}}(p_\vartheta(z|s, a) \| p_\varphi(z|s))\big] \\
&= \int_a \pi(a|s) \int_z p_\vartheta(z|s,a) \log \frac{p_\vartheta(z|s,a)}{p_\varphi(z|s)} \, dz \, da \\
&= \int_a \int_z \pi(a|s) p_\vartheta(z|s,a) \log \left( \frac{p_\vartheta(z|s,a)}{\hat{p}_\pi(z|s)} \cdot \frac{\hat{p}_\pi(z|s)}{p_\varphi(z|s)} \right) dz \, da \\
&= \int_a \int_z \pi(a|s) p_\vartheta(z|s,a) \log \frac{p_\vartheta(z|s,a)}{\hat{p}_\pi(z|s)} dz \, da+ \int_a \int_z \pi(a|s) p_\vartheta(z|s,a) \log \frac{\hat{p}_\pi(z|s)}{p_\varphi(z|s)} dz \, da\\
&= I_\pi(A;Z|s) + D_{\mathrm{KL}}\big(\hat{p}_\pi(z|s) \| p_\varphi(z|s)\big)\label{eq:I(az)}
\end{align}

where $I_\pi(A;Z|s)$ denotes the mutual information between the action $a$ and the latent consequence $z$ induced by the policy $\pi$ conditioned on state $s$. Since mutual information is strictly non-negative, the left-hand side serves as a rigorous upper bound to the empirical latent divergence. By minimizing this upper bound, we can fuse the latent constraint with the Monte Carlo sampling of the Q-value, resulting in our final, highly efficient optimization objective in Eq.~\eqref{eq:pi_practical}.

Crucially, utilizing this upper bound is not merely a computational compromise, but a highly desirable mechanism. The gap between the original marginal KL and our surrogate is exactly the mutual information penalty $I_\pi(A;Z|S)$:
\begin{align}
|D_{s'} - \hat{D}_{po}| 
&\le   |D_{s'} - \hat{D}_{z}| + |\hat{D}_{z}- \hat{D}_{po}|  \quad \text{(Triangle Inequality)} \\
& = |D_{s'} - \hat{D}_{z}| + I_\pi(A;Z|S) \quad\ \text{(Eq.~\eqref{eq:I(az)})} 
\end{align}
where $\hat{D}_{po} := \mathbb{E}_{s \sim \hat{d}^{\pi_\beta}(s),a\sim\pi(\cdot|s)} \big[D_{\mathrm{KL}}(p_\vartheta(z|s, a) \| p_\varphi(z|s))\big]$.
This yields the bound:
\begin{align}
    &\left|\mathbb{E}_{s \sim d^{\pi_\beta}} \big[ D_{\mathrm{KL}}(P_\pi(s'|s) \| P_{\pi_\beta}(s'|s)) \big] - \mathbb{E}_{s \sim \hat{d}^{\pi_\beta}(s),a\sim\pi(\cdot|s)} \big[D_{\mathrm{KL}}(p_\vartheta(z|s, a) \| p_\varphi(z|s))\big] \right| \\
    \le &\mathbb{E}_{s,s'} \left[ D_{\mathrm{KL}}(p_\pi(z|s,s') \| p_{\pi_\beta}(z|s,s')) \right]
    + \mathcal{O}\left( \sqrt{\frac{\mathrm{Var}_s(D_{\mathrm{KL}}(p_{\pi}(z|s)  \| p_{\pi_\beta}(z|s)) \log(1/\delta)}{N}} \right) + C_\pi \epsilon_{\mathrm{opt},\beta} + I_\pi(A;Z|S)
\end{align}

Minimizing the mutual information penalty $I_\pi(A;Z|S)$ implicitly enforces action–consequence consistency, thereby resolving a critical risk-dilution issue. Specifically, solely matching marginal distributions allows dangerous OOD actions to be masked by expectation averaging, rendering the aggregate distribution $\hat{p}_\pi(z|s)$ deceptively safe. In contrast, penalizing this term enforces a point-wise constraint: every individually sampled action must strictly align with the safe behavior distribution, fundamentally preventing catastrophic deviations.

However, tightening this point-wise KL upper bound introduces a structural trade-off:
On the one hand, it significantly benefits tasks where the optimal policy is near-deterministic. As illustrated in Figure~\ref{fig:exp3}, stochastic policies often assign non-negligible probability mass to sub-optimal actions, leading to inferior performance compared to deterministic policies. On the other hand, excessively penalizing policy-conditioned divergence risks entropy collapse by overly penalizing stochasticity, which may induce premature convergence and severe overfitting. Consequently, maintaining necessary policy stochasticity requires incorporating explicit entropy regularization.

\newpage

\section{Implementation Details and Extended Experiments}\label{sec:app_experiments}

\subsection{Benchmark Experiments}\label{sec:app_main}

\paragraph{Data.} 
We use the datasets from the D4RL benchmark~\citep{d4rl} of the latest versions, which are v2 for MuJoCo, AntMaze and v1 for Adroit. We provide an explanation of the abbreviations used in the Figure~\ref{fig:exp3}. Mujoco Tasks: Hc = halfcheetah, H = hopper, W = walker2d, m = medium, mr = medium-replay, me = medium-expert. AntMaze Tasks: u = AntMaze-umaze, u-d = AntMaze-umaze-diverse, m-p = AntMaze-medium-play, m-d = AntMaze-medium-diverse, l-p = AntMaze-large-play, l-d = AntMaze-large-diverse. Adroit Tasks: h = human, hamm = hammer.

\paragraph{Baselines.} 
We report the baseline results from their original papers, and reproduce the missing experiments using the official code:
\begin{itemize}
    \item TD3BC~\citep{td3bc}: \url{https://github.com/sfujim/TD3_BC}
    \item SPOT~\citep{spot}: \url{https://github.com/thuml/SPOT}
    \item FQL~\citep{fql}: \url{https://github.com/seohongpark/fql}
\end{itemize}

\paragraph{Implementation details.} 
We normalize the states in the dataset for MuJoCo and Adroit but do not normalize the states for AntMaze. Following CQL~\citep{cql}, we subtract 1 from rewards for the AntMaze datasets. Hyperparameters used are presented in Table \ref{tab:hyper}. For policy evaluation, we average mean returns over 10 evaluation trajectories on the MuJoCo and Adroit tasks, and average over 100 evaluation trajectories on the AntMaze tasks. For stochastic policies implementation in Section~\ref{sec:ablation}, we directly replace the flow matching with our latent divergence. Our code is available at: \url{https://github.com/DarrenJum/StBR}. 

To prevent the Q-value gradients from dominating the KL regularization term, we adopt the normalization strategy like most policy constraint methods~\citep{td3bc,spot,cped}. We scale the Q-values by an absolute average over the current minibatch:
$ \mathcal{L}_\pi(\phi) = \mathbb{E}_{s \sim \mathcal{D},a \sim \pi_\phi(\cdot|s)} \left[ -\frac{Q_\theta(s, a)}{\alpha} + \lambda D_{\mathrm{KL}}(p_\vartheta(z|s, a) \| p_\varphi(z|s)) \right], $
where the scaling factor $\alpha = \frac{1}{B} \sum_{i=1}^B |Q_\theta(s_i, a_i)|$ is dynamically computed from a minibatch of $B$ transitions, ensuring stable optimization across different tasks.

\paragraph{Explicit Next-State Prediction (ENSP) Baseline.}
To demonstrate the advantage of constraining policy updates in the latent space rather than the raw observation space, we construct an explicit next-state prediction baseline. Instead of matching probability distributions via KL divergence, this baseline directly penalizes deviations in the state space using an explicit deterministic dynamics model $\hat{s}' = \hat{P}(s, a)$, pre-trained on transitions $(s, a, s') \sim \mathcal{D}$ via mean squared error. During policy optimization, we regularize the policy by pulling the predicted next state under the policy action toward the empirical transition target $s'$ recorded in the dataset:
\begin{equation}
    \max_{\pi_\phi} \mathbb{E}_{(s, s') \sim \mathcal{D}, a \sim \pi_\phi(\cdot|s)} \left[ \frac{Q_\theta(s, a)}{\alpha} - \lambda_{ENSP} \left\| \hat{P}(s, a) - s' \right\|_2^2 \right],
\end{equation}
where $\alpha$ is the batch-adaptive scaling factor defined as above. 

\begin{algorithm}[h]
    \caption{Structural Behavior Regularization (StBR)}
    \label{alg:sbr}
    \textbf{Input}: The offline dataset $\mathcal{D} = \{(s, a, r, s')\}$
    \begin{algorithmic}[1]
        \STATE Initialize SBGF with parameters $\psi$, $\vartheta$ and $\varphi$, policy network $\pi_\phi$, critic network $Q_\theta$ and target network $Q_{\bar{\theta}}$ with $\bar{\theta} \leftarrow \theta$ \\
        \FOR{$t = 1$ to $T$}
            \STATE Sample mini-batch of transitions $(s,a,r,s') \sim \mathcal{D}$
            \STATE Update $\theta$ minimizing $\mathcal{L}_Q(\theta)$ in Eq.~\eqref{eq:empirical_T}
            \STATE Update $\phi$ minimizing $ \mathcal{L}_\pi(\phi)$ in Eq.~\eqref{eq:pi_practical}
            \STATE Update $\psi, \vartheta, \varphi$ minimizing $\mathcal{L}_{\mathrm{SBGF}}(\psi, \vartheta, \varphi)$ in Eq.~\eqref{eq:sbvib}
            \STATE Update target network: $\bar{\theta}\leftarrow\tau\theta+(1-\tau)\bar{\theta}$
        \ENDFOR
    \end{algorithmic}
\end{algorithm}

\begin{table}[h]
\centering
\caption{Hyperparameters in StBR}
\begin{tabular}{cll}
\toprule
\multicolumn{1}{l}{}           & Hyperparameter                & Value                 \\ \hline
\multirow{13}{*}{StBR}          & Optimizer                     & Adam                   \\
                               & Activation function& ReLU                   \\
                               & Critic learning rate          & $3\times10^{-4}$         \\
                               & Actor learning rate           & $3\times10^{-4}$        \\
                               & Batch size                    & 256                    \\
                               & Discount factor               & 0.99                      \\
                               & Gradient Steps                & $1\times10^{6}$          \\
                               & SBGF learning rate            & $1\times10^{-3}$         \\
                               & Target network update rate    & 0.005                      \\
                               & Policy update frequency       & 2                          \\
                               & Number of Critics             & 4 (including 2 target networks)     \\
                               & $\beta$             & 0.001 for MuJoCo, 0.01 for AntMaze and Adroit    \\ 
                               & $\lambda$   & \{0.03,0.05,0.1\} (Detail for each task see Table \ref{tab:hyper_detail})  \\ \hline
\multirow{3}{*}{Architecture}      & Actor                         & input-256-256-action dim           \\
                                   & Critic                        & input-256-256-1                    \\
                                   & Encoder $p_\vartheta(z|s,a)$  & input-750-750-$1.2\times$state dim \\ 
                                   & Encoder $p_\varphi(z|s)$      & input-750-750-$1.2\times$state dim \\
                                   & Decoder $q_\psi(s'|z)$        & input-750-750-state dim     \\ \midrule
\end{tabular}\label{tab:hyper}
\end{table}

\begin{table*}[h]
\centering
\caption{Detailed configurations of $\lambda$ and $\beta$ across different tasks.}
\begin{tabular}{ll|rr|r}
\toprule     
 Task Name           &    & $\lambda$    & $\beta$    & $\lambda_{ENSP}$ \\ \midrule
\multirow{3}{*}{halfcheetah} & medium           & 0.03  & 0.001  &0.05\\
                             & medium-replay    & 0.03  & 0.001  &0.05\\
                             & medium-expert    & 0.05  & 0.001  &0.2\\ \hline
\multirow{3}{*}{hopper}      & medium           & 0.03  & 0.001  &0.1\\
                             & medium-replay    & 0.05  & 0.001  &0.05\\
                             & medium-expert    & 0.05  & 0.001  &0.2\\ \hline
\multirow{3}{*}{walker2d}    & medium           & 0.1   & 0.001  &0.2\\
                             & medium-replay    & 0.1   & 0.001  &0.2\\
                             & medium-expert    & 0.1   & 0.001  &0.2\\ \hline
\multirow{6}{*}{antmaze} & umaze                & 0.1   & 0.01  &0.1\\
                             & umaze-diverse   & 0.05  & 0.01  &0.1\\
                             & medium-play      & 0.05  & 0.01  &0.05\\
                             & medium-diverse   & 0.05  & 0.01  &0.05\\
                             & large-play       & 0.05  & 0.01  &0.05\\
                             & large-diverse    & 0.05  & 0.01  &0.05\\ \hline
pen                          & human            & 0.03  & 0.01  &0.05\\
hammer                       & human            & 0.03  & 0.01  &0.1\\
door                         & human            & 0.03  & 0.01  &0.2\\ \bottomrule
\end{tabular}
\label{tab:hyper_detail}
\end{table*}

\paragraph{Hyperparameter tuning and Learning curves.}
Approaches to offline policy evaluation and selection have garnered significant attention within the research community. As discussed by \citet{tune,spot}, assuming limited access to a real system to test a small number of trained policies constitutes a reasonable setup for practical applications such as robotics. 

In StBR, the coefficient $\lambda$ in Eq.~\eqref{eq:pi_practical} is a crucial hyperparameter that requires tuning. Following prior works~\citep{spot,bear,brac}, we allow limited access to the online environment to tune this parameter across a minimal candidate set of $\{0.03, 0.05, 0.1\}$. To ensure a rigorous evaluation, we tune $\lambda$ using a single random seed. Once the optimal hyperparameter is identified, we train the policy from scratch using five entirely new seeds. The final reported results are based exclusively on these five evaluation seeds, omitting the initial tuning seed. Note that the Adroit Human domain presents a challenging setting characterized by high-dimensional spaces and limited dataset sizes (only $\sim$10,000 transitions). This leads to severe value overestimation, necessitating a stronger degree of conservatism. To facilitate hyperparameter tuning, we specifically employ the following policy loss for this task: $ \mathcal{L}_\pi(\phi) = \mathbb{E}_{s \sim \mathcal{D},a \sim \pi_\phi(\cdot|s)} \left[ -\lambda \frac{Q_\theta(s, a)}{\alpha} + (1-\lambda) D_{\mathrm{KL}}(p_\vartheta(z|s, a) \| p_\varphi(z|s)) \right]. $

Figures \ref{fig:curve1} and \ref{fig:curve2} present the training curves for all tasks in the benchmark experiments. The solid lines depict the mean scores across the five evaluation seeds, while the shaded regions indicate the minimum and maximum values. A natural question that arises is \textit{how latent divergence performs when applied solely to BC.} To address this, these figures also include the training curves for this specific configuration, denoted as StBR+BC.

\begin{figure*}[h]
    \centering
    \includegraphics[width=0.9\linewidth]{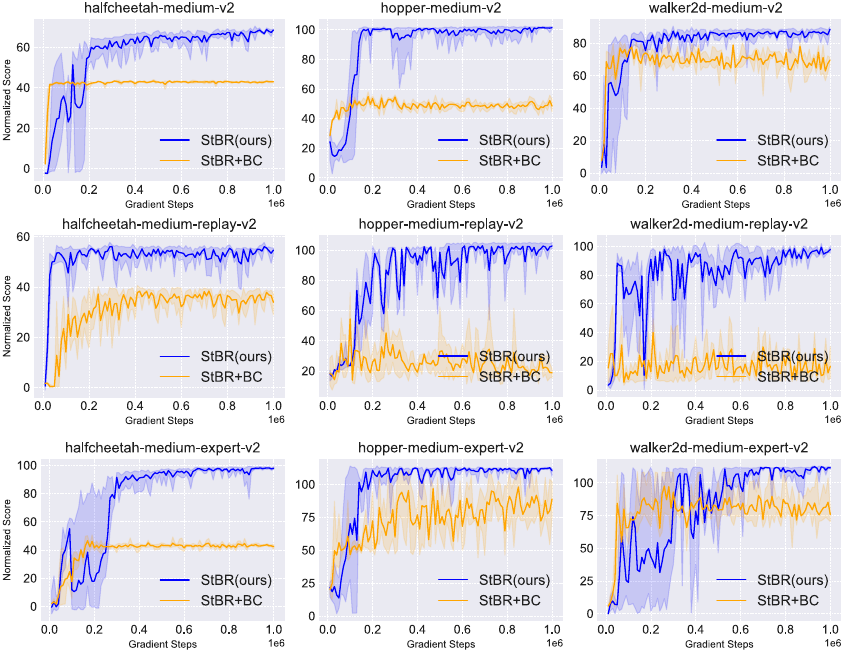}
    \caption{The results of StBR and StBR+BC on the MuJoCo task. The shaded area indicates the minimum and maximum score across 5 seeds, with the solid line depicting their mean.}
    \label{fig:curve1}
\end{figure*}
\begin{figure*}[h]
    \centering
    \includegraphics[width=0.9\linewidth]{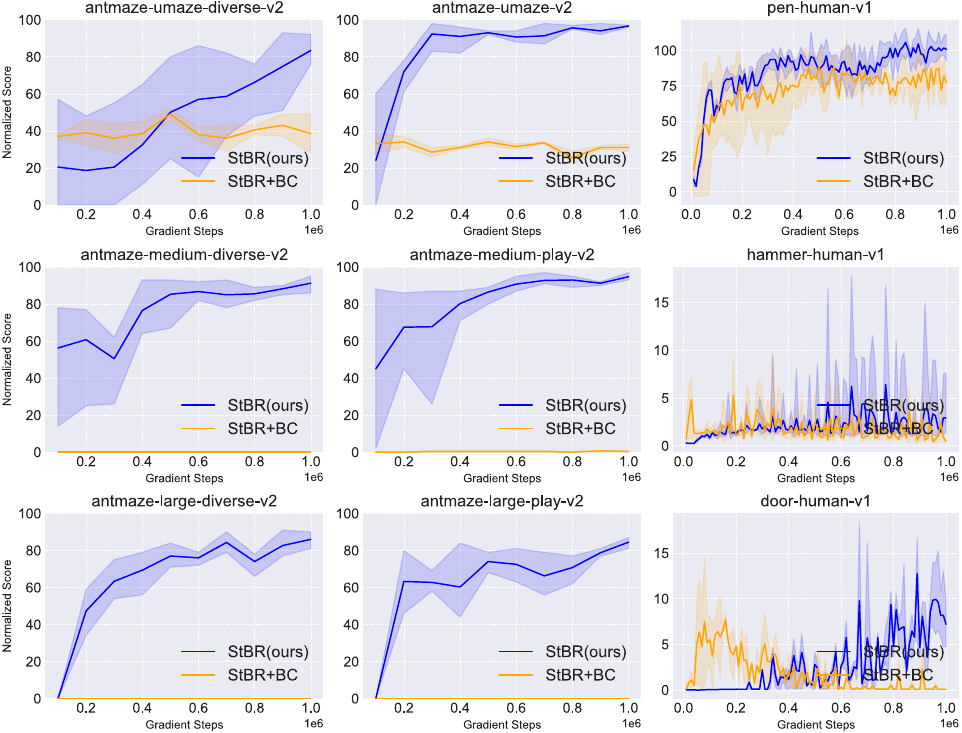}
    \caption{The results of StBR and StBR+BC on the AntMaze and Adroit task. The shaded area indicates the minimum and maximum score across 5 seeds, with the solid line depicting their mean.}
    \label{fig:curve2}
\end{figure*}

\subsection{Theoretical Validation Experiments}\label{sec:app_theoretical}

\paragraph{Design of environment stochasticity experiments} \label{sec:app_theoretical_stochastic}
We evaluate StBR on the \texttt{hopper-medium} task with artificially injected action execution stochasticity.
To simulate transition stochasticity in real-world environments, we introduce action noise during both the offline data collection and online interaction phases as simulated motor errors. Specifically, we add Gaussian noise to the agent's intended action to emulate the control errors inherent in physical actuators $\tilde{a} \sim \mathcal{N}(a, \sigma_{env}^2)$. From the agent's perspective, this discrepancy between the intended action and the executed action leads to unpredictable next states, thereby naturally transforming the deterministic Hopper simulator into a stochastic environment.

\paragraph{Visualization of OOD action separation.} 
Furthermore, we evaluate whether StBR effectively shapes the divergence landscape to prevent the agent from taking OOD actions. Figure~\ref{fig:visual_ood} tracks the latent divergence throughout the training process for both OOD actions ($a \sim \mathrm{Unif}(\mathcal{A})$) and learned policy's actions ($a \sim \pi_\phi$) in \texttt{hopper-medium} and \texttt{pen-human} tasks. As illustrated in Figure~\ref{fig:visual_ood}, the latent divergence for OOD actions remains substantially larger than that of the policy-constrained actions. This confirms that StBR successfully penalizes minor action deviations that could lead to drastically different, unknown transitions, thereby robustly keeping the policy within the offline dataset.

\begin{figure*}[h]
    \centering
    \includegraphics[width=0.75\linewidth]{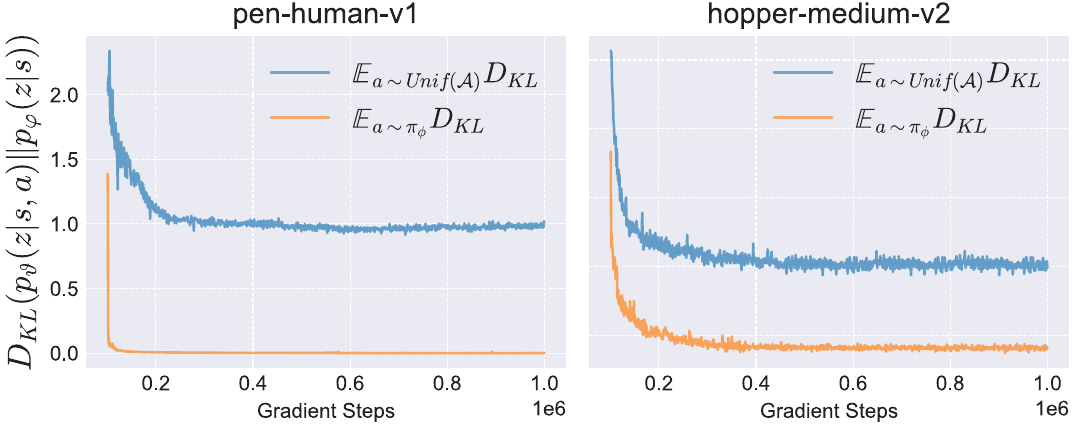}
    \caption{The figure tracks the latent divergence across gradient steps for OOD actions and learned policy's actions. Notably, the latent divergence of the constrained policy is significantly lower than that of the OOD actions, demonstrating that our latent divergence effectively separates in-distribution (ID) from OOD actions.}
    \label{fig:visual_ood}
\end{figure*}

\subsection{Additional Experimental Results}\label{sec:app_additional}
To comprehensively validate the capabilities and practical advantages of StBR beyond standard benchmark evaluations, we design three targeted sets of supplementary experiments. Specifically, we aim to answer the following questions: (1) How does StBR perform when high-quality data are scarce? (2) Does the consequence-aware nature of StBR practically enable out-of-sample (OOS) state recovery ability? and (3) How does StBR differ from model-based approaches when evaluating its learned decoder?

\paragraph{Robustness to limited valuable data.}
To investigate our StBR's performance on suboptimal datasets, we design an experiment to quantitatively evaluate scenarios with limited valuable data.
Specifically, we construct mixed datasets using \texttt{walker2d-random} and \texttt{walker2d-expert}. By progressively reducing the proportion of expert data from 50\% down to 10\%, we simulate scenarios where high-quality transitions are extremely scarce.

As shown in Table~\ref{tab:limited}, severely limiting the expert data (e.g., down to 10\%) causes traditional methods (CQL, SDC) to experience a performance collapse, largely due to over-conservatism affected by suboptimal transitions. While the advanced FQL achieves stronger robustness through more expressive modeling of the behavior policy, StBR consistently outperforms all these methods across all levels. This superior robustness is directly enabled by our consequence-aware support constraint.

\begin{table}[h]
    \centering
    \caption{Normalized scores on mixed walker2d datasets with varying proportions of random (r) and expert (e) transitions. Results are averaged over four seeds. StBR demonstrates significant robustness against limited valuable data  dilution.}
    \label{tab:limited}
    \begin{tabular}{@{} l|c c c |c@{}}\toprule
Limited Valuable Dataset   & CQL      & SDC      & FQL       & StBR       \\\midrule
Walker2d-r0.5-e0.5      & 88.3±4.9 & 88.8±3.9 & 108.9±4.5 & \textbf{113.9}±3.1 \\
Walker2d-r0.6-e0.4      & 81.5±6.3 & 65.9±5.0 & 91.7±6.6  & \textbf{106.1}±2.4 \\
Walker2d-r0.7-e0.3      & 48.3±5.8 & 85.5±4.7 & 88.7±9.1  & \textbf{99.9}±8.1  \\
Walker2d-r0.8-e0.2      & 50.5±7.2 & 49.8±5.6 & 70.4±11.5 & \textbf{87.7}±5.5  \\
Walker2d-r0.9-e0.1      & 17.6±3.0 & 26.8±2.1 & 66.7±23.6 & \textbf{73.4}±22.9 \\ \bottomrule
    \end{tabular}
\end{table}

\paragraph{Dataset-size ablation.}
We evaluated StBR on the hopper-medium dataset under varying data sizes, with the mean and standard deviation reported across 5 seeds. We randomly sampled trajectory subsets ranging from 1\% to 100\% of the offline dataset for standard offline RL training. As motivated by our theoretical analysis, a larger $\beta$ is beneficial under data scarcity; thus, we used $\beta = 0.003$ for dataset-size below 10\% and $\beta = 0.001$ for the rest. We also compared StBR against a VAE-based regularization baseline (SPOT). 

The results in Table~\ref{tab:size_vary} demonstrate that StBR is robust to dataset size, maintaining strong performance and stability when at least 10\% of the data is available. When the dataset is reduced to 5\% and 1\%, where the primary bottleneck becomes the quality of the randomly sampled trajectories, StBR still achieves stable learning. 

\begin{table}[h]
    \centering
    \caption{Normalized scores with varying dataset-size. Results are averaged over 5 seeds.}
    \label{tab:size_vary}
    \begin{tabular}{@{} l|c c c c c c@{}}\toprule
Dataset-size(\%)   & 100&50&25&10&5&1    \\\midrule
StBR(Ours)      & 100.9±1.0&100.4±1.2&99.5±1.5&97.2±4.1&78.2±2.7&65.6±14.5 \\
SPOT      & 86.0±8.7&83.2±14.0&83.6±6.2&72.9±4.5&44.1±15.0&38.8±19.0 \\ 
\bottomrule
    \end{tabular}
\end{table}

\paragraph{Out-of-sample (OOS) state recovery.}
We investigate whether StBR equips the agent with the ability to recover from OOS states back to the data manifold. Following the perturbation protocols established in prior OOS state recovery studies~\citep{sdc,osr}, we conduct an OOS state recovery experiment on the AntMaze tasks. Specifically, we apply varying scales of Gaussian noise $s_{oos}\sim \mathcal{N}(s, \sigma_{oos}^2)$ to the in-distribution states $s$ sampled from the offline dataset, yielding perturbed OOS states $s_{oos}$. We then train the policy to pull the agent back to the behavior support from these OOS states by the following loss:
\begin{align}
\mathcal{L}_\pi(\phi) = \mathbb{E}_{s \sim \mathcal{D}}\mathbb{E}_{s_{oos} \sim \mathcal{N}(s, \sigma_{oos}^2)} \big[ &-\mathbb{E}_{a \sim \pi_\phi(\cdot|s)} Q_\theta(s, a) \nonumber \\
&+ \lambda \mathbb{E}_{a' \sim \pi_\phi(\cdot|s_{oos})}D_{\mathrm{KL}}(p_\vartheta(z|s_{oos}, a') \| p_\varphi(z|s_{oos})) \big].
\end{align}

As illustrated in Table~\ref{tab:osr}, introducing an appropriate level of OOS state perturbation (e.g., $\sigma_{oos}=0.001$) effectively enhances the agent's generalization performance. However, when the perturbation scale is excessively large (e.g., $\sigma_{oos}=0.1$), it completely distorts the underlying semantic structure of the states, making the OOS states unrecoverable and resulting in severe performance degradation. Overall, these results verify that StBR successfully endows the agent with remarkable state recovery capabilities, robustly navigating it back to safe regions when facing moderate distributional shifts.



\begin{table}[h]
\centering
\caption{The mean and standard deviation of averaged normalized scores after 1M steps of OOS state recovery training, averaged over 5 seeds and the final 4 evaluations. The highest scores in each task are bolded.}
\label{tab:osr}
\begin{tabular}{lllll} \toprule
Task Name           & $\sigma_{oos}=0$        & $\sigma_{oos}=0.001$    & $\sigma_{oos}=0.01$     & $\sigma_{oos}=0.1$      \\\midrule
AntMaze-medium-play      & 94.2±1.9 & \textbf{97.1}±2.4 & 92.6±3.1 & 44.4±7.4 \\
AntMaze-medium-diverse   & 90.6±3.6 & \textbf{95.1}±1.8 & 91.0±4.5 & 47.4±4.8 \\
AntMaze-large-play       & 84.2±2.7 & \textbf{88.5}±2.7 & 84.5±2.2 & 19.6±6.2 \\
AntMaze-large-diverse    & 85.8±5.1 & \textbf{86.2}±3.3 & 83.1±6.1 & 12.8±7.4 \\\midrule
Total                    & 354.8 &	354.8$\rightarrow$\textbf{366.9} &	354.8$\rightarrow$351.2 &	354.8$\rightarrow$124.2 \\ \bottomrule
\end{tabular}
\end{table}

\paragraph{Distinction from model-based methods.}
Since SBGF learns a decoder that can essentially be viewed as a complete dynamics model, it is natural to ask whether our approach can be explicitly deployed as a model-based offline RL method. To investigate this, we implement an StBR+MB algorithm built upon the MOPO~\citep{mopo} framework. Specifically, we utilize the trained decoder as a forward model to generate rollout data and employ our latent divergence as the measure for model uncertainty with the same hyperparameter tuning protocol as model-free StBR. 

The empirical results demonstrate that StBR+MB can indeed serve as an effective model-based variant, achieving performance slightly superior to MOPO. However, its overall efficacy falls short of more advanced model-based methods like RAMBO, primarily due to the inherent parameter sensitivity and the difficulty of acquiring perfectly reliable model uncertainty estimates. 
More importantly, our standard model-free StBR significantly outperforms all of these model-based baselines. This highlights a critical advantage: even if the learned forward model is imperfect for data generation, the latent space successfully acquires consequence-aware representations. This allows the latent divergence to provide a highly effective, robust constraint on the policy without relying on explicit rollouts, fundamentally distinguishing the mechanism of StBR from traditional model-based paradigms.



\begin{table}[h]
\centering
\caption{The mean and standard deviation of averaged normalized scores after 3M steps of model-based offline RL training, averaged over 5 seeds and the final 10 evaluations. The top-2 highest scores in each task are bolded.}
\label{tab:mb_results}
\begin{tabular}{ll|ll|ll}
\toprule
                             & Task Name      & MOPO      & RAMBO     & StBR+MB    & StBR(ours)       \\\midrule
\multirow{3}{*}{halfcheetah} & medium        & 42.3±1.6  & \textbf{71.0}±2.9  & 41.4±2.8  & \textbf{60.7}±0.7  \\
                             & medium-replay & 53.1±2.0  & \textbf{67.0}±1.5  & \textbf{55.8}±3.1  & 54.4±1.0  \\
                             & medium-expert & 63.3±38.0 & 79.3±2.9  & \textbf{90.6}±5.9  & \textbf{97.8}±0.2  \\\hline
\multirow{3}{*}{hopper}      & medium        & 28.0±12.4 & \textbf{91.2}±16.3 & 80.3±7.3  & \textbf{100.9}±1.0 \\
                             & medium-replay & 67.5±24.7 & \textbf{97.6}±3.4  & 90.6±2.6  & \textbf{100.8}±1.9 \\
                             & medium-expert & 23.7±6.0  & \textbf{89.5}±11.1 & 77.8±6.5  & \textbf{111.6}±0.9 \\\hline
\multirow{3}{*}{walker2d}    & medium        & 17.8±19.3 & \textbf{89.1}±2.7  & 7.8±6.0   & \textbf{87.2}±0.9  \\
                             & medium-replay & 39.0±9.6  & \textbf{88.5}±4.0  & 41.5±8.4  & \textbf{86.3}±1.5  \\
                             & medium-expert & 44.6±12.9 & \textbf{63.1}±31.3 & 46.7±23.7 & \textbf{111.3}±1.4 \\\midrule
Total                        &               & 379.3     & \textbf{747.5}     & 532.5     & \textbf{811.0}     \\
\bottomrule
\end{tabular}
\end{table}

\begin{table}[h]
\centering
\caption{The runtime of StBR on \texttt{hopper-medium-v2} task using a GeForce RTX 2080Ti GPU and Intel Xeon Gold 5218 CPU.}
\resizebox{\linewidth}{!}{
\begin{tabular}{l|rrrrrrrrr|r}
\toprule
                   & BC   & CQL  & TD3BC  & IQL  & PBRL  & POR  & SDC  & SPOT  & FQL  & StBR(ours) \\ \midrule
Runtime (ms/step)  & 2.0  & 24.2 & 4.7    & 6.5  &  26.7 & 8.8  & 25.0 & 6.7   & 6.6  & 6.8   \\
Wall time (minutes)& 33   & 403  & 78     & 108  &  444  & 147  & 417  & 111   & 110  & 114   \\
Inference time (ms)& -    & -    & 476    & -    &  -    & 909  & -    & -     & 492  & 476   \\
 \bottomrule
\end{tabular}
}
\label{tab:computation_results}
\end{table}

\subsection{Computation Cost}
Benefiting from its pluggable design, StBR operates as a highly efficient regularization method. During inference, it requires only a single forward pass of the policy network. During the training phase, StBR introduces minimal computational overhead, necessitating only the additional forward passes of two encoders. This provides a distinct computational advantage over ensemble-based methods, which require maintaining and querying multiple models, as well as flow-matching or diffusion models, which rely on numerous iterative forward passes.

As demonstrated in Table~\ref{tab:computation_results} on the \texttt{hopper-medium-v2} task, StBR maintains a wall-clock training time that is highly competitive with most standard regularization methods. Furthermore, because StBR trains a single-step policy and integrates seamlessly as a pluggable module, it achieves exceptionally fast inference speeds, making it highly practical for real-time deployment.


\begin{table}[h]
    \centering
    \caption{Online fine-tuning results on AntMaze tasks, showing initial performance after offline RL and performance after 1M steps of online RL. All results are averaged over 5 seeds.}
    \label{tab:off2on}
    \begin{tabular}{@{} l|c c |c@{}}
        \toprule
        AntMaze          & IQL & Cal-QL & StBR (Ours) \\ \midrule
        umaze            & 89.6 $\rightarrow$ 96.2    & 76.8 $\rightarrow$ \textbf{99.8}  & 99.2 $\rightarrow 99.6_{\text{±}0.4}$ \\
        umaze-diverse    & 65.6 $\rightarrow$ 87.2    & 32.0 $\rightarrow$ 98.5  & 95.6 $\rightarrow \textbf{99.4}_{\text{±}0.7}$ \\
        medium-play      & 76.4 $\rightarrow$ 94.0    & 71.8 $\rightarrow$ 98.8  & 93.6 $\rightarrow \textbf{99.0}_{\text{±}0.8}$ \\
        medium-diverse   & 72.8 $\rightarrow$ 92.0    & 62.0 $\rightarrow$ 98.3  & 89.8 $\rightarrow \textbf{99.1}_{\text{±}0.7}$ \\
        large-play       & 42.0 $\rightarrow$ 51.0    & 31.8 $\rightarrow$ \textbf{97.3}  & 78.0 $\rightarrow 96.9_{\text{±}1.1}$ \\
        large-diverse    & 46.0 $\rightarrow$ 58.4    & 44.0 $\rightarrow$ 91.5  & 77.8 $\rightarrow \textbf{96.5}_{\text{±}1.2}$ \\ \midrule
        \textbf{Average} & 65.4 $\rightarrow$ 79.8    & 53.0 $\rightarrow$ 97.3  & 89.0 $\rightarrow \textbf{98.4} $    \\ \bottomrule
    \end{tabular}
\end{table}

\begin{figure}[h]
    \centering
    \includegraphics[width=0.8\linewidth]{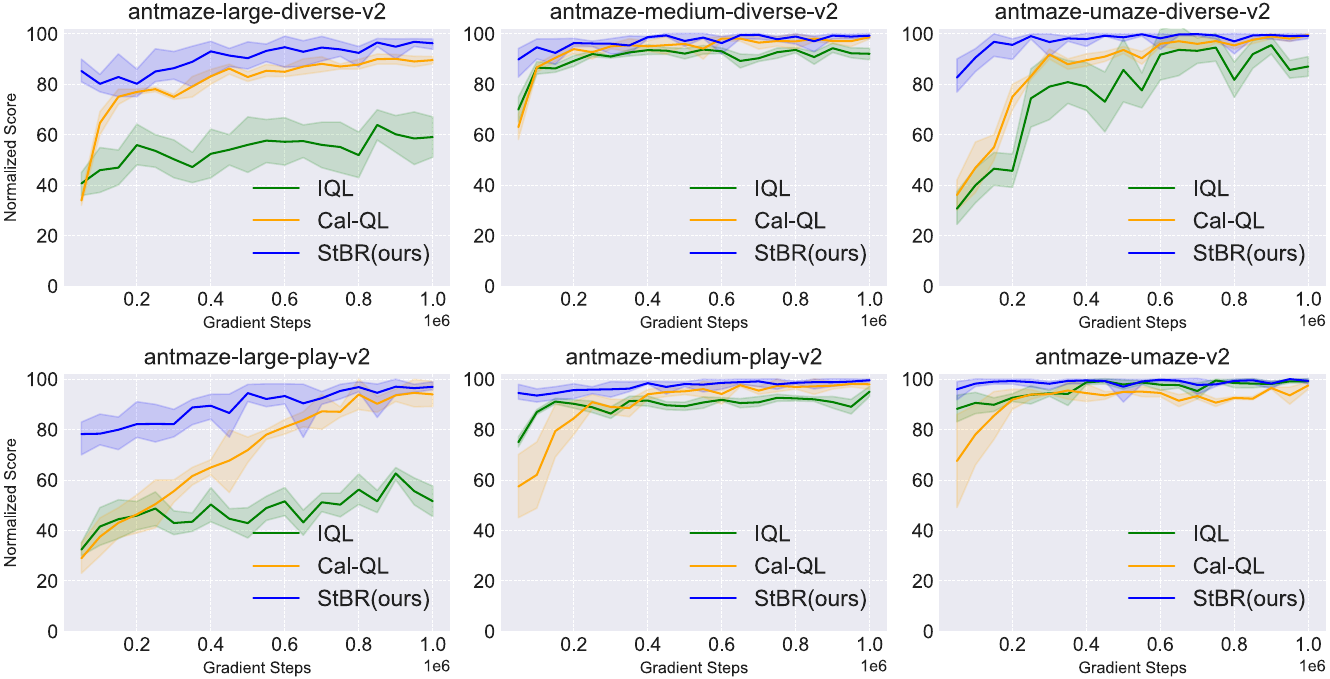}
    \caption{Online fine-tuning training curves on AntMaze tasks, showing initial performance after offline RL and performance after 1M steps of online RL. (5 seeds)}
    \label{fig:off2on}
\end{figure}

\subsection{Can StBR be fine-tuned with online rollouts?}\label{sec:app_off2on}
StBR naturally extends to the online fine-tuning setting. This is achieved by continuously appending new online transitions, $\mathcal{D}_{\text{online}}$, to the replay buffer while training the policy and value networks with the original offline objective.

\paragraph{Implementation details.} 
We conduct our online fine-tuning experiments on the complex AntMaze domains. Due to their high-dimensional state-action spaces and sparse rewards, bootstrapping errors can easily accumulate during online exploration. To mitigate this, we retain the complete offline dataset in the replay buffer and augment it with new online data, which is actively collected using an exploration noise of $0.1$. Notably, without altering the algorithmic framework, we continuously update the SBGF throughout the $1$ million gradient steps of the online phase. All other hyperparameters remain identical to the baseline methods.

\paragraph{Comparison results.} 
We evaluate StBR against Cal-QL~\citep{calql}, a performant method specifically designed for offline-to-online RL and IQL~\citep{iql}, a standard offline RL baseline. Table~\ref{tab:off2on} summarizes the final performance, and Figure~\ref{fig:off2on} plots the corresponding learning curves. The results demonstrate that the superior offline policy provided by StBR serves as a robust initialization, allowing the agent to achieve faster convergence and higher asymptotic performance compared to existing approaches.

\newpage

\section*{NeurIPS Paper Checklist}

\begin{enumerate}

\item {\bf Claims}
    \item[] Question: Do the main claims made in the abstract and introduction accurately reflect the paper's contributions and scope?
    \item[] Answer: \answerYes{} 
    \item[] Justification: The "abstract", "introduction" and "preliminaries and motivation" sections clearly state the problem of consequence-shift vulnerability, and the methodologies to solve this effectively. Theoretical and experimental evidences are provided to support our claims.
    \item[] Guidelines:
    \begin{itemize}
        \item The answer \answerNA{} means that the abstract and introduction do not include the claims made in the paper.
        \item The abstract and/or introduction should clearly state the claims made, including the contributions made in the paper and important assumptions and limitations. A \answerNo{} or \answerNA{} answer to this question will not be perceived well by the reviewers. 
        \item The claims made should match theoretical and experimental results, and reflect how much the results can be expected to generalize to other settings. 
        \item It is fine to include aspirational goals as motivation as long as it is clear that these goals are not attained by the paper. 
    \end{itemize}

\item {\bf Limitations}
    \item[] Question: Does the paper discuss the limitations of the work performed by the authors?
    \item[] Answer: \answerYes{}  
    \item[] Justification: The paper includes a dedicated "Limitations" paragraph in the Conclusion section.
    \item[] Guidelines:
    \begin{itemize}
        \item The answer \answerNA{} means that the paper has no limitation while the answer \answerNo{} means that the paper has limitations, but those are not discussed in the paper. 
        \item The authors are encouraged to create a separate ``Limitations'' section in their paper.
        \item The paper should point out any strong assumptions and how robust the results are to violations of these assumptions (e.g., independence assumptions, noiseless settings, model well-specification, asymptotic approximations only holding locally). The authors should reflect on how these assumptions might be violated in practice and what the implications would be.
        \item The authors should reflect on the scope of the claims made, e.g., if the approach was only tested on a few datasets or with a few runs. In general, empirical results often depend on implicit assumptions, which should be articulated.
        \item The authors should reflect on the factors that influence the performance of the approach. For example, a facial recognition algorithm may perform poorly when image resolution is low or images are taken in low lighting. Or a speech-to-text system might not be used reliably to provide closed captions for online lectures because it fails to handle technical jargon.
        \item The authors should discuss the computational efficiency of the proposed algorithms and how they scale with dataset size.
        \item If applicable, the authors should discuss possible limitations of their approach to address problems of privacy and fairness.
        \item While the authors might fear that complete honesty about limitations might be used by reviewers as grounds for rejection, a worse outcome might be that reviewers discover limitations that aren't acknowledged in the paper. The authors should use their best judgment and recognize that individual actions in favor of transparency play an important role in developing norms that preserve the integrity of the community. Reviewers will be specifically instructed to not penalize honesty concerning limitations.
    \end{itemize}

\item {\bf Theory assumptions and proofs}
    \item[] Question: For each theoretical result, does the paper provide the full set of assumptions and a complete (and correct) proof?
    \item[] Answer: \answerYes{} 
    \item[] Justification: The main paper and Appendix explicitly state the required assumptions.
    \item[] Guidelines:
    \begin{itemize}
        \item The answer \answerNA{} means that the paper does not include theoretical results. 
        \item All the theorems, formulas, and proofs in the paper should be numbered and cross-referenced.
        \item All assumptions should be clearly stated or referenced in the statement of any theorems.
        \item The proofs can either appear in the main paper or the supplemental material, but if they appear in the supplemental material, the authors are encouraged to provide a short proof sketch to provide intuition. 
        \item Inversely, any informal proof provided in the core of the paper should be complemented by formal proofs provided in appendix or supplemental material.
        \item Theorems and Lemmas that the proof relies upon should be properly referenced. 
    \end{itemize}

    \item {\bf Experimental result reproducibility}
    \item[] Question: Does the paper fully disclose all the information needed to reproduce the main experimental results of the paper to the extent that it affects the main claims and/or conclusions of the paper (regardless of whether the code and data are provided or not)?
    \item[] Answer: \answerYes{}  
    \item[] Justification: Detailed experimental settings are referenced in the main text. These are extensively expanded upon in Appendix C, providing the necessary architectural configurations, hyperparameters, optimizers, and extensive supplementary experiments.
    \item[] Guidelines:
    \begin{itemize}
        \item The answer \answerNA{} means that the paper does not include experiments.
        \item If the paper includes experiments, a \answerNo{} answer to this question will not be perceived well by the reviewers: Making the paper reproducible is important, regardless of whether the code and data are provided or not.
        \item If the contribution is a dataset and\slash or model, the authors should describe the steps taken to make their results reproducible or verifiable. 
        \item Depending on the contribution, reproducibility can be accomplished in various ways. For example, if the contribution is a novel architecture, describing the architecture fully might suffice, or if the contribution is a specific model and empirical evaluation, it may be necessary to either make it possible for others to replicate the model with the same dataset, or provide access to the model. In general. releasing code and data is often one good way to accomplish this, but reproducibility can also be provided via detailed instructions for how to replicate the results, access to a hosted model (e.g., in the case of a large language model), releasing of a model checkpoint, or other means that are appropriate to the research performed.
        \item While NeurIPS does not require releasing code, the conference does require all submissions to provide some reasonable avenue for reproducibility, which may depend on the nature of the contribution. For example
        \begin{enumerate}
            \item If the contribution is primarily a new algorithm, the paper should make it clear how to reproduce that algorithm.
            \item If the contribution is primarily a new model architecture, the paper should describe the architecture clearly and fully.
            \item If the contribution is a new model (e.g., a large language model), then there should either be a way to access this model for reproducing the results or a way to reproduce the model (e.g., with an open-source dataset or instructions for how to construct the dataset).
            \item We recognize that reproducibility may be tricky in some cases, in which case authors are welcome to describe the particular way they provide for reproducibility. In the case of closed-source models, it may be that access to the model is limited in some way (e.g., to registered users), but it should be possible for other researchers to have some path to reproducing or verifying the results.
        \end{enumerate}
    \end{itemize}

\item {\bf Open access to data and code}
    \item[] Question: Does the paper provide open access to the data and code, with sufficient instructions to faithfully reproduce the main experimental results, as described in supplemental material?
    \item[] Answer:  \answerYes{}
    \item[] Justification: The code were supplied to reviewers as an anonymous zip archive, and full
public links will be added in the camera-ready version.
    \item[] Guidelines:
    \begin{itemize}
        \item The answer \answerNA{} means that paper does not include experiments requiring code.
        \item Please see the NeurIPS code and data submission guidelines (\url{https://neurips.cc/public/guides/CodeSubmissionPolicy}) for more details.
        \item While we encourage the release of code and data, we understand that this might not be possible, so \answerNo{} is an acceptable answer. Papers cannot be rejected simply for not including code, unless this is central to the contribution (e.g., for a new open-source benchmark).
        \item The instructions should contain the exact command and environment needed to run to reproduce the results. See the NeurIPS code and data submission guidelines (\url{https://neurips.cc/public/guides/CodeSubmissionPolicy}) for more details.
        \item The authors should provide instructions on data access and preparation, including how to access the raw data, preprocessed data, intermediate data, and generated data, etc.
        \item The authors should provide scripts to reproduce all experimental results for the new proposed method and baselines. If only a subset of experiments are reproducible, they should state which ones are omitted from the script and why.
        \item At submission time, to preserve anonymity, the authors should release anonymized versions (if applicable).
        \item Providing as much information as possible in supplemental material (appended to the paper) is recommended, but including URLs to data and code is permitted.
    \end{itemize}

\item {\bf Experimental setting/details}
    \item[] Question: Does the paper specify all the training and test details (e.g., data splits, hyperparameters, how they were chosen, type of optimizer) necessary to understand the results?
    \item[] Answer: \answerYes{} 
    \item[] Justification: The paper specifies baseline architectures, optimization protocols, hardware details, and hyperparameters comprehensively in Experiment section and Appendix C.
    \item[] Guidelines:
    \begin{itemize}
        \item The answer \answerNA{} means that the paper does not include experiments.
        \item The experimental setting should be presented in the core of the paper to a level of detail that is necessary to appreciate the results and make sense of them.
        \item The full details can be provided either with the code, in appendix, or as supplemental material.
    \end{itemize}

\item {\bf Experiment statistical significance}
    \item[] Question: Does the paper report error bars suitably and correctly defined or other appropriate information about the statistical significance of the experiments?
    \item[] Answer: \answerYes{} 
    \item[] Justification:  Our paper reports appropriate information about statistical significance of the experiments.
    \item[] Guidelines:
    \begin{itemize}
        \item The answer \answerNA{} means that the paper does not include experiments.
        \item The authors should answer \answerYes{} if the results are accompanied by error bars, confidence intervals, or statistical significance tests, at least for the experiments that support the main claims of the paper.
        \item The factors of variability that the error bars are capturing should be clearly stated (for example, train/test split, initialization, random drawing of some parameter, or overall run with given experimental conditions).
        \item The method for calculating the error bars should be explained (closed form formula, call to a library function, bootstrap, etc.)
        \item The assumptions made should be given (e.g., Normally distributed errors).
        \item It should be clear whether the error bar is the standard deviation or the standard error of the mean.
        \item It is OK to report 1-sigma error bars, but one should state it. The authors should preferably report a 2-sigma error bar than state that they have a 96\% CI, if the hypothesis of Normality of errors is not verified.
        \item For asymmetric distributions, the authors should be careful not to show in tables or figures symmetric error bars that would yield results that are out of range (e.g., negative error rates).
        \item If error bars are reported in tables or plots, the authors should explain in the text how they were calculated and reference the corresponding figures or tables in the text.
    \end{itemize}

\item {\bf Experiments compute resources}
    \item[] Question: For each experiment, does the paper provide sufficient information on the computer resources (type of compute workers, memory, time of execution) needed to reproduce the experiments?
    \item[] Answer: \answerYes{} 
    \item[] Justification: Appendix C details the specific compute hardware used (GeForce RTX 2080Ti GPU and Intel Xeon Gold 5218 CPU).
    \item[] Guidelines:
    \begin{itemize}
        \item The answer \answerNA{} means that the paper does not include experiments.
        \item The paper should indicate the type of compute workers CPU or GPU, internal cluster, or cloud provider, including relevant memory and storage.
        \item The paper should provide the amount of compute required for each of the individual experimental runs as well as estimate the total compute. 
        \item The paper should disclose whether the full research project required more compute than the experiments reported in the paper (e.g., preliminary or failed experiments that didn't make it into the paper). 
    \end{itemize}
    
\item {\bf Code of ethics}
    \item[] Question: Does the research conducted in the paper conform, in every respect, with the NeurIPS Code of Ethics \url{https://neurips.cc/public/EthicsGuidelines}?
    \item[] Answer: \answerYes{}  
    \item[] Justification: The research involves fundamental algorithmic developments evaluated on standard, publicly available datasets (e.g., D4RL) and does not violate any ethical guidelines.
    \item[] Guidelines:
    \begin{itemize}
        \item The answer \answerNA{} means that the authors have not reviewed the NeurIPS Code of Ethics.
        \item If the authors answer \answerNo, they should explain the special circumstances that require a deviation from the Code of Ethics.
        \item The authors should make sure to preserve anonymity (e.g., if there is a special consideration due to laws or regulations in their jurisdiction).
    \end{itemize}

\item {\bf Broader impacts}
    \item[] Question: Does the paper discuss both potential positive societal impacts and negative societal impacts of the work performed?
    \item[] Answer: \answerNA{} 
    \item[] Justification: This paper focuses on foundational theoretical improvements to behavior regularization offline RL algorithms. It does not present direct, foreseeable negative societal impacts or obvious ties to malicious applications.
    \item[] Guidelines:
    \begin{itemize}
        \item The answer \answerNA{} means that there is no societal impact of the work performed.
        \item If the authors answer \answerNA{} or \answerNo, they should explain why their work has no societal impact or why the paper does not address societal impact.
        \item Examples of negative societal impacts include potential malicious or unintended uses (e.g., disinformation, generating fake profiles, surveillance), fairness considerations (e.g., deployment of technologies that could make decisions that unfairly impact specific groups), privacy considerations, and security considerations.
        \item The conference expects that many papers will be foundational research and not tied to particular applications, let alone deployments. However, if there is a direct path to any negative applications, the authors should point it out. For example, it is legitimate to point out that an improvement in the quality of generative models could be used to generate Deepfakes for disinformation. On the other hand, it is not needed to point out that a generic algorithm for optimizing neural networks could enable people to train models that generate Deepfakes faster.
        \item The authors should consider possible harms that could arise when the technology is being used as intended and functioning correctly, harms that could arise when the technology is being used as intended but gives incorrect results, and harms following from (intentional or unintentional) misuse of the technology.
        \item If there are negative societal impacts, the authors could also discuss possible mitigation strategies (e.g., gated release of models, providing defenses in addition to attacks, mechanisms for monitoring misuse, mechanisms to monitor how a system learns from feedback over time, improving the efficiency and accessibility of ML).
    \end{itemize}
    
\item {\bf Safeguards}
    \item[] Question: Does the paper describe safeguards that have been put in place for responsible release of data or models that have a high risk for misuse (e.g., pre-trained language models, image generators, or scraped datasets)?
    \item[] Answer: \answerNA{} 
    \item[] Justification: The models and evaluation benchmarks used do not fall under the category of high-risk for dual-use or misuse. No scraped or sensitive data is released.
    \item[] Guidelines:
    \begin{itemize}
        \item The answer \answerNA{} means that the paper poses no such risks.
        \item Released models that have a high risk for misuse or dual-use should be released with necessary safeguards to allow for controlled use of the model, for example by requiring that users adhere to usage guidelines or restrictions to access the model or implementing safety filters. 
        \item Datasets that have been scraped from the Internet could pose safety risks. The authors should describe how they avoided releasing unsafe images.
        \item We recognize that providing effective safeguards is challenging, and many papers do not require this, but we encourage authors to take this into account and make a best faith effort.
    \end{itemize}

\item {\bf Licenses for existing assets}
    \item[] Question: Are the creators or original owners of assets (e.g., code, data, models), used in the paper, properly credited and are the license and terms of use explicitly mentioned and properly respected?
    \item[] Answer: \answerYes{} 
    \item[] Justification: Existing datasets, platforms, and baseline models are properly credited with citations in the text and the references section.
    \item[] Guidelines:
    \begin{itemize}
        \item The answer \answerNA{} means that the paper does not use existing assets.
        \item The authors should cite the original paper that produced the code package or dataset.
        \item The authors should state which version of the asset is used and, if possible, include a URL.
        \item The name of the license (e.g., CC-BY 4.0) should be included for each asset.
        \item For scraped data from a particular source (e.g., website), the copyright and terms of service of that source should be provided.
        \item If assets are released, the license, copyright information, and terms of use in the package should be provided. For popular datasets, \url{paperswithcode.com/datasets} has curated licenses for some datasets. Their licensing guide can help determine the license of a dataset.
        \item For existing datasets that are re-packaged, both the original license and the license of the derived asset (if it has changed) should be provided.
        \item If this information is not available online, the authors are encouraged to reach out to the asset's creators.
    \end{itemize}

\item {\bf New assets}
    \item[] Question: Are new assets introduced in the paper well documented and is the documentation provided alongside the assets?
    \item[] Answer: \answerNA{} 
    \item[] Justification: The paper does not introduce or release new datasets or proprietary assets.
    \item[] Guidelines:
    \begin{itemize}
        \item The answer \answerNA{} means that the paper does not release new assets.
        \item Researchers should communicate the details of the dataset\slash code\slash model as part of their submissions via structured templates. This includes details about training, license, limitations, etc. 
        \item The paper should discuss whether and how consent was obtained from people whose asset is used.
        \item At submission time, remember to anonymize your assets (if applicable). You can either create an anonymized URL or include an anonymized zip file.
    \end{itemize}

\item {\bf Crowdsourcing and research with human subjects}
    \item[] Question: For crowdsourcing experiments and research with human subjects, does the paper include the full text of instructions given to participants and screenshots, if applicable, as well as details about compensation (if any)? 
    \item[] Answer:  \answerNA{} 
    \item[] Justification: This research does not involve any human subjects or crowdsourcing.
    \item[] Guidelines:
    \begin{itemize}
        \item The answer \answerNA{} means that the paper does not involve crowdsourcing nor research with human subjects.
        \item Including this information in the supplemental material is fine, but if the main contribution of the paper involves human subjects, then as much detail as possible should be included in the main paper. 
        \item According to the NeurIPS Code of Ethics, workers involved in data collection, curation, or other labor should be paid at least the minimum wage in the country of the data collector. 
    \end{itemize}

\item {\bf Institutional review board (IRB) approvals or equivalent for research with human subjects}
    \item[] Question: Does the paper describe potential risks incurred by study participants, whether such risks were disclosed to the subjects, and whether Institutional Review Board (IRB) approvals (or an equivalent approval/review based on the requirements of your country or institution) were obtained?
    \item[] Answer: \answerNA{} 
    \item[] Justification: There is no research involving human subjects, making IRB approval unnecessary.
    \item[] Guidelines:
    \begin{itemize}
        \item The answer \answerNA{} means that the paper does not involve crowdsourcing nor research with human subjects.
        \item Depending on the country in which research is conducted, IRB approval (or equivalent) may be required for any human subjects research. If you obtained IRB approval, you should clearly state this in the paper. 
        \item We recognize that the procedures for this may vary significantly between institutions and locations, and we expect authors to adhere to the NeurIPS Code of Ethics and the guidelines for their institution. 
        \item For initial submissions, do not include any information that would break anonymity (if applicable), such as the institution conducting the review.
    \end{itemize}

\item {\bf Declaration of LLM usage}
    \item[] Question: Does the paper describe the usage of LLMs if it is an important, original, or non-standard component of the core methods in this research? Note that if the LLM is used only for writing, editing, or formatting purposes and does \emph{not} impact the core methodology, scientific rigor, or originality of the research, declaration is not required.
    \item[] Answer: \answerNA{} 
    \item[] Justification: Large language models are not part of the paper’s core methods.
    \item[] Guidelines:
    \begin{itemize}
        \item The answer \answerNA{} means that the core method development in this research does not involve LLMs as any important, original, or non-standard components.
        \item Please refer to our LLM policy in the NeurIPS handbook for what should or should not be described.
    \end{itemize}

\end{enumerate}

\end{document}